\DeclareMathOperator*{\argmin}{arg\,min}
\numberwithin{equation}{section}
\numberwithin{figure}{section}
\theoremstyle{plain}
	\newtheorem{theorem}{Theorem}[section]
	\newtheorem{lemma}[theorem]{Lemma}
\theoremstyle{definition}
	\newtheorem{definition}[theorem]{Definition}
	\newtheorem*{remark*}{Remark}
\newcounter{Frame}
\let\citet\cite
\let\citep\cite
\title{BANANAS: Bayesian Optimization with Neural Architectures
for Neural Architecture Search}
\author{
Colin White
\\
Abacus.AI\\
\texttt{colin@abacus.ai}
\and 
Willie Neiswanger
\\
Stanford University 
and Petuum Inc.\\
\texttt{neiswanger@cs.stanford.edu}
\\
\and
Yash Savani
\\
Abacus.AI\\
\texttt{yash@abacus.ai}
}
\date{}
\begin{document}

\maketitle

\begin{abstract}

Over the past half-decade, many methods have been considered for 
neural architecture search (NAS). Bayesian optimization (BO),
which has long had success in hyperparameter optimization,
has recently emerged as a very promising strategy for NAS
when it is coupled with a neural predictor.
Recent work has proposed different instantiations of this framework,
for example, using Bayesian neural networks or graph convolutional
networks as the predictive model within BO.
However, the analyses in these papers often focus on the
full-fledged NAS algorithm,
so it is difficult to tell which individual components
of the framework lead to the best performance.

In this work, we give a thorough analysis of the 
``BO + neural predictor'' framework
by identifying five main components:
the architecture encoding, neural predictor,
uncertainty calibration method, acquisition function, 
and acquisition optimization strategy.
We test several different methods for each component and
also develop a novel path-based encoding scheme for neural architectures,
which we show theoretically and empirically scales better than other
encodings.
Using all of our analyses, we develop a final algorithm called BANANAS,
which achieves state-of-the-art performance on NAS search spaces.
We adhere to the NAS research checklist (Lindauer and Hutter 2019) to 
facilitate best practices, and our code is available at 
\url{https://github.com/naszilla/naszilla}.

\end{abstract}

\section{Introduction}
\label{sec:introduction}

Since the deep learning revolution in 2012, neural networks have been growing increasingly more complex and specialized \cite{alexnet, densenet, szegedy2017inception}.
Developing new state-of-the-art architectures often takes a vast amount of engineering and domain knowledge.
A rapidly developing area of research, neural architecture search (NAS), 
seeks to automate this process.
Since the popular work by 
Zoph and Le~\citep{zoph2017neural}, 
there has been a flurry of research on NAS
\cite{pnas, enas, darts, nasbot, nas-survey, auto-keras}.
Many methods have been proposed, including evolutionary search, 
reinforcement learning, Bayesian optimization (BO), and gradient descent.
In certain settings, zeroth-order (non-differentiable) algorithms
such as BO are of particular interest over first-order (one-shot)
techniques, due to advantages such as simple parallelism, 
joint optimization with other hyperparameters,
easy implementation, portability to diverse architecture spaces,
and optimization of other/multiple non-differentiable objectives.

BO with Gaussian processes (GPs) has had success in deep learning 
hyperparameter optimization \cite{vizier, bohb},
and is a leading method for efficient zeroth order
optimization of expensive-to-evaluate functions in Euclidean spaces.
However, initial approaches for applying GP-based BO to NAS came with
challenges that limited its ability to achieve state-of-the-art results. 
For example, initial approaches required specifying a distance function
between architectures, which involved cumbersome hyperparameter 
tuning~\cite{nasbot, auto-keras}, and required a time-consuming matrix
inversion step.

Recently, Bayesian optimization with a neural
predictor has emerged as a high-performing framework for NAS.
This framework avoids the aforementioned problems with BO in NAS:
there is no need to construct a distance function between architectures,
and the neural predictor scales far better than a GP model.
Recent work has proposed different instantiations
of this framework, for example, Bayesian neural networks with 
BO~\cite{springenberg2016bayesian}, and
graph neural networks with BO~\cite{shi2019multi, ma2019deep}.
However, the analyses often focus on the 
full-fledged NAS algorithm,
making it challenging to tell which components of the
framework lead to the best performance.

In this work, we start by performing a thorough analysis of the 
``BO + neural predictor'' framework.
We identify five major components of the framework: architecture encoding, 
neural predictor,
uncertainty calibration method, acquisition function, 
and acquisition optimization strategy.
For example, graph convolutional networks, 
variational autoencoder-based networks,
or feedforward networks can be used for the neural predictor, and
Bayesian neural networks or different types of ensembling methods can be used 
for the uncertainty calibration method.
After conducting experiments on all components of the BO + neural predictor
framework, we use this analysis to define a high-performance instantiation
of the framework, which we call
BANANAS: Bayesian optimization with
neural architectures for NAS.

In order for the neural predictor to achieve the highest accuracy, 
we also define a novel path-based architecture encoding,
which we call the path encoding.
The motivation for the path encoding is as follows.
Each architecture in the search space can be represented as a labeled 
directed acyclic graph (DAG)  -- a set of nodes and directed edges,
together with a list of the operations that each node (or edge) represents.
However, the adjacency matrix can be difficult for the neural network to
interpret~\cite{zhou2018graph}, since the features are highly dependent on
one another. By contrast, each feature in our path encoding scheme represents
a unique path that the tensor can take from the input layer to the output
layer of the  architecture.
We show theoretically and experimentally that this encoding 
scales better than the adjacency matrix encoding, and allows neural predictors
to achieve higher accuracy.

We compare BANANAS to a host of popular NAS algorithms including
random search~\cite{randomnas},
DARTS~\cite{darts},
regularized evolution~\cite{real2019regularized}, 
BOHB~\cite{bohb},
NASBOT~\cite{nasbot},
local search~\cite{white2020local},
TPE~\cite{tpe},
BONAS~\cite{shi2019multi},
BOHAMIANN~\cite{springenberg2016bayesian},
REINFORCE~\cite{reinforce}, 
GP-based BO~\cite{snoek2012practical}, 
AlphaX~\cite{alphax}, 
ASHA~\cite{randomnas}, 
GCN Predictor~\cite{wen2019neural}, and
DNGO~\cite{snoek2015scalable}.
BANANAS achieves state-of-the-art performance on NASBench-101 and is competitive
on all NASBench-201 datasets.
Subsequent work has also shown that BANANAS is competitive on 
NASBench-301~\citep{nasbench301},
even when compared to first-order methods such as
DARTS~\cite{darts}, PC-DARTS~\cite{pcdarts}, and GDAS~\cite{gdas}.

Finally, to promote reproducibility, in
Appendix~\ref{app:checklist} 
we discuss how our experiments
adhere to the NAS best practices checklist~\cite{lindauer2019best}.
In particular, we experiment on well-known search spaces and NAS pipelines,
run enough trials to reach statistical significance,
and release our code.

\noindent\textbf{Our contributions.}
We summarize our main contributions.
\begin{itemize}[topsep=0pt, itemsep=2pt, parsep=0pt, leftmargin=5mm]
    \item 
    We analyze a simple framework for NAS: Bayesian optimization with a
    neural predictor, and we thoroughly test five components:
    the encoding, neural predictor, calibration,
    acquisition function, and acquisition optimization.
    \item 
    We propose a novel path-based encoding for architectures,
    which improves the accuracy of neural predictors.
    We give theoretical and experimental results showing 
    that the path encoding scales better than the 
    adjacency matrix encoding.
    \item 
    We use our analyses to develop BANANAS, a high performance instantiation
    of the above framework.
    We empirically show that BANANAS is state-of-the-art on popular
    NAS benchmarks.
\end{itemize}

\section{Societal Implications} \label{sec:impact}
Our work gives a new method for neural architecture search, with the aim of
improving the performance of future deep learning research.
Therefore, we have much less control over the net impact of our work on
society. For example, our work may be used to tune a deep
learning optimizer for reducing the carbon footprint of large power plants,
but it could just as easily be used to improve a deep fake generator.
Clearly, the first example would have a positive impact on society, while
the second example may have a negative impact.

Our work is one level of abstraction from real applications,
but our algorithm, and more generally the field of NAS,
may become an important step in advancing the field of artificial intelligence.
Because of the recent push for explicitly reasoning about the impact of
research in AI~\cite{hecht2018time}, we are hopeful that neural architecture
search will be used to benefit society.

\section{Related Work}
\label{sec:relatedwork}

NAS has been studied since at least the 1990s
and has gained significant attention in the past few years
\cite{kitano1990designing, stanley2002evolving, zoph2017neural}. 
Some of the most popular recent techniques for NAS include evolutionary algorithms \cite{maziarz2018evolutionary}, reinforcement learning \cite{zoph2017neural, enas}, BO~\cite{nasbot}, and gradient descent \cite{darts}.
For a survey of neural architecture search, see~\cite{nas-survey}.

Initial BO approaches defined a distance function between 
architectures~\cite{nasbot, auto-keras}.
There are several works that predict the validation accuracy 
of neural 
networks~\cite{klein2016learning, peephole, tapas, zhang2018graph, baker2017accelerating}.
A few recent papers have used Bayesian optimization with a graph neural
network as a predictor~\cite{ma2019deep, shi2019multi},
however, they do not conduct an ablation study of all components of the 
framework.
In this work, we do not claim to invent the BO + neural predictor framework,
however, we give the most in-depth analysis that we are aware of, which we
use to design a high-performance instantiation of this framework.

There is also prior work on using neural network models in
BO for hyperparameter optimization
\cite{snoek2015scalable, springenberg2016bayesian}, 
The explicit goal of these papers is to improve the efficiency of 
Gaussian process-based BO from cubic to linear time,
not to develop a different type of prediction model in order to
improve the performance of BO with respect to the number of iterations. 

Recent papers have called for fair and reproducible experiments~\cite{randomnas,  nasbench}.
In this vein, the NASBench-101~\cite{nasbench}, -201~\cite{nasbench201},
and -301~\cite{nasbench301} datasets 
were created, which contain tens of thousands of pretrained neural architectures.
We provide additional related work details in Appendix~\ref{app:relatedwork}.

\paragraph{Subsequent work.}
Since its release, several papers have included BANANAS in new experiments,
further showing that BANANAS is a competitive NAS 
algorithm~\citep{remaade, nasbench301,nguyen2020optimal, nasbowl, npenas}.
Finally, a recent paper conducted a study on several encodings used for 
NAS~\cite{white2020study}, concluding that neural predictors perform well
with the path encoding.

\section{BO + Neural Predictor Framework}
\label{sec:preliminaries}
In this section, we give a background on BO, and we describe the
BO + neural predictor framework.
In applications of BO for deep learning, the typical goal is to find a
neural architecture and/or set of hyperparameters that lead to
an optimal validation error. Formally, BO
seeks to compute $a^* = \argmin_{a\in A} f(a)$, where $A$ is
the search space, and $f(a)$
denotes the validation error of architecture $a$ after training
on a fixed dataset for a fixed number of epochs. In the standard 
BO setting,
over a sequence of iterations, the results from all previous iterations 
are used to model the topology of $\{f(a)\}_{a\in A}$ using
the posterior distribution of the model (often a GP).
The next architecture is then chosen by optimizing an acquisition
function such as expected improvement (EI) \cite{movckus1975bayesian}
or Thompson sampling (TS) \cite{thompson1933likelihood}.
These functions balance exploration with exploitation during the search.
The chosen architecture is then trained and used to update the model of $\{f(a)\}_{a\in A}$.
Evaluating $f(a)$ in each iteration is the bottleneck of BO 
(since a neural network must be trained). To mitigate this,
parallel BO methods typically output $k$ architectures to
train in each iteration, so that the $k$
architectures can be trained in parallel.

\paragraph{BO + neural predictor framework.}
In each iteration of BO, we train a neural network
on all previously evaluated architectures, $a$, to predict the validation
accuracy $f(a)$ of unseen architectures.
The architectures are represented as labeled DAGs~\cite{nasbench, nasbench201},
and there are different methods of encoding the DAGs before they are passed to
the neural predictor~\cite{nasbench, white2020study}, 
which we describe in the next section.
Choices for the neural predictor include feedforward networks, 
graph convolutional networks (GCN),
and variational autoencoder (VAE)-based networks.
In order to evaluate an acquisition function, we also compute an uncertainty
estimate for each input datapoint. This can be accomplished by using,
for example, a Bayesian neural network or an ensemble of neural predictors.
Given the acquisition function, an acquisition optimization routine is then 
carried out, which returns the next architecture to be evaluated.
In the next section, we give a thorough analysis of the choices that
must be made when instantiating this framework.

\section{Analysis of the Framework}
\label{sec:methodology}

\begin{figure*}[h]
\centering
\includegraphics[width=0.5\textwidth]{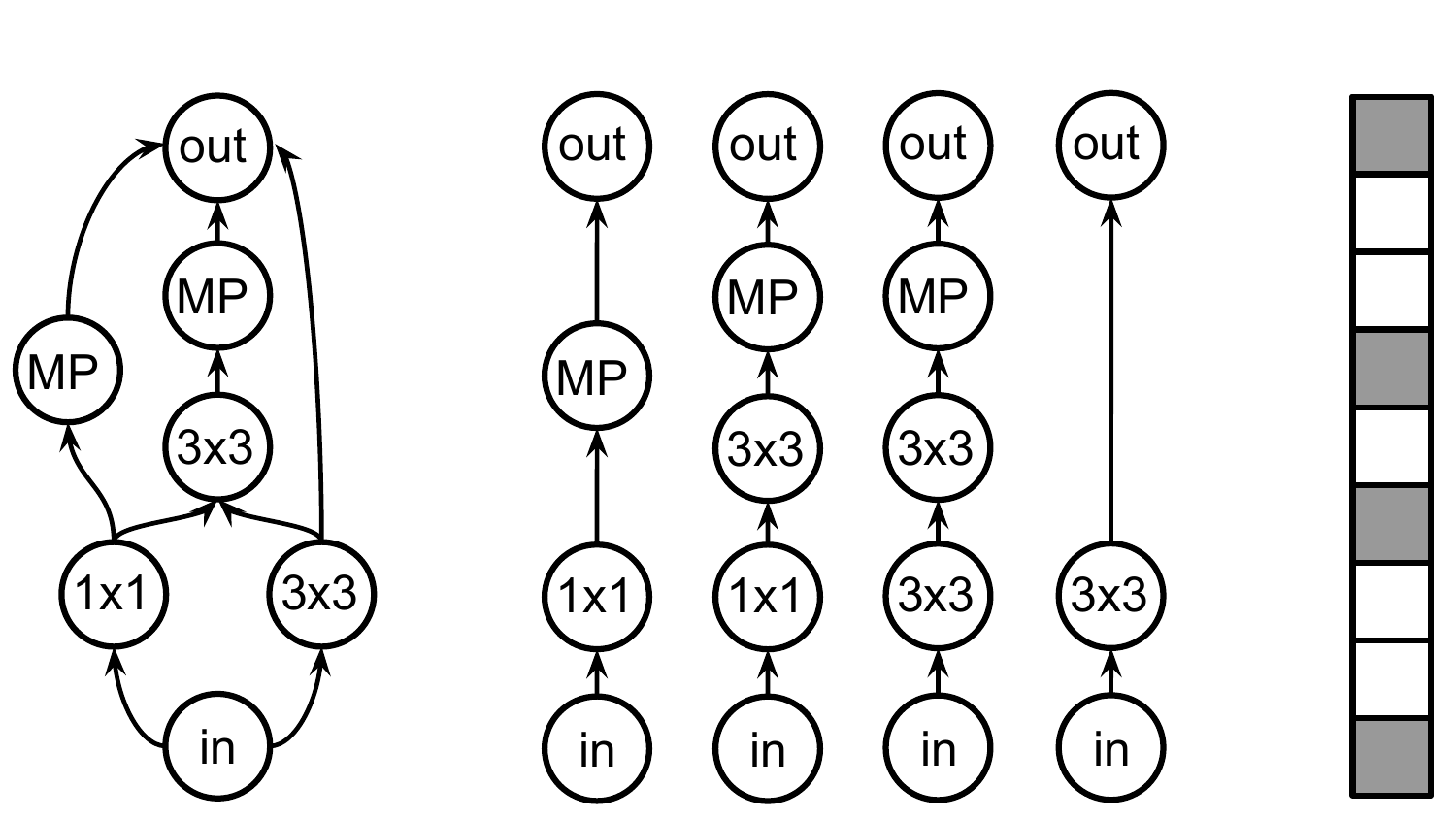}
\hspace{1cm}
\includegraphics[width=0.38\textwidth]{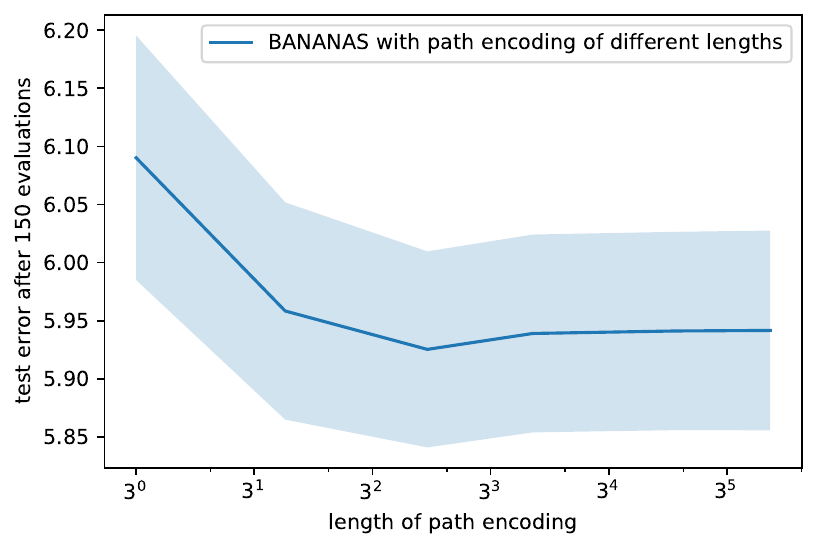}
\caption{Example of the path encoding (left).
Performance of BANANAS with the path encoding truncated to different
lengths (right). Since each node has 3 choices of operations, the ``natural''
cutoffs are at powers of 3.
}
\label{fig:path_encoding}
\end{figure*}

In this section, we give an extensive study of the BO + neural predictor
framework. First, we discuss architecture encodings, and we define a novel
featurization called the path encoding.
Then we conduct an analysis of different choices of neural predictors.
Next, we analyze different methods for achieving calibrated uncertainty
estimates from the neural predictors.
After that, we conduct experiments on different acquisition functions
and acquisition optimization strategies.
Finally, we use these analyses to create our algorithm, BANANAS.

Throughout this section, we run experiments on the NASBench-101 dataset
(experiments on additional search spaces are given in Section~\ref{sec:experiments}).
The NASBench-101 dataset~\citep{nasbench} consists of over 423,000 
neural architectures from a cell-based search space,
and each architecture comes with precomputed validation 
and test accuracies on CIFAR-10. 
The search space consists of a DAG with 7 nodes
that can each take on three different operations, and there can
be at most 9 edges between the nodes. 
We use the open source version of the NASBench-101 dataset~\cite{nasbench}.
We give the full details about the use of NASBench-101 in
Appendix~\ref{app:experiments}. Our code is available at
\url{https://github.com/naszilla/naszilla}.


\paragraph{Architecture encodings.}
The majority of existing work on neural predictors use an adjacency
matrix representation to encode the neural architectures.
The adjacency matrix encoding gives an arbitrary ordering to the nodes, 
and then gives a binary feature for an edge between node $i$ and node $j$, 
for all $i<j$. 
Then a list of the operations at each node must also be included in the encoding.
This is a challenging data structure for a neural predictor to interpret 
because it relies on an arbitrary indexing of the nodes, and features are 
highly dependent on one another. For example, an edge from the input to node 2 
is useless if there is no path from node 2 to the output.
And if there \emph{is} an edge from node 2 to the output, this edge is highly
correlated with the feature that describes the operation at node 2 
(conv\_1x1, pool\_3x3, etc.).
A continuous-valued variant of the adjacency matrix encoding has also 
been tested~\cite{nasbench}.

We introduce a novel encoding which we term the \emph{path encoding},
and we show that it substantially increases the performance of neural predictors.
The path encoding is quite simple to define: there is a binary feature for each
path from the input to the output of an architecture cell,
given in terms of the operations
(e.g., input$\rightarrow$conv\_1x1$\rightarrow$pool\_3x3$\rightarrow$output). 
To encode an architecture, we simply check which paths are present in the architecture,
and set the corresponding features to 1s.
See Figure \ref{fig:path_encoding}.
Intuitively, the path encoding has a few strong advantages.
The features are not nearly as dependent on one another as they are in the adjacency
matrix encoding, since each feature represents a unique path that the data tensor can
take from the input node to the output node.
Furthermore, there is no longer an arbitrary node ordering, which means that
each neural architecture maps to only one encoding (which is not true for the
adjacency matrix encoding). On the other hand, it is possible for multiple
architectures to map to the same path encoding
(i.e., the encoding is well-defined, but it is not one-to-one).
However, subsequent work showed that architectures
with the same path encoding also have very similar validation 
errors~\cite{white2020study}, which is beneficial in NAS algorithms.

The length of the path encoding is the total number of possible paths in a cell,
$\sum_{i=0}^n q^i$, where $n$ denotes
the number of nodes in the cell, and $q$ denotes the number
of operations for each node. 
However, we present theoretical and experimental evidence that substantially
truncating the path encoding, even to length smaller than the adjacency matrix encoding,
does not decrease its performance.
Many NAS algorithms sample architectures by randomly sampling edges in the DAG 
subject to a maximum edge constraint~\cite{nasbench}.
Intuitively, the vast majority of paths have a very low probability of occurring
in a cell returned by this procedure.
Therefore, by simply truncating the least-likely paths,
our encoding scales \emph{linearly} in the size of the cell,
with an arbitrarily small amount of information loss.
In the following theorem, let $G_{n, k, r}$ denote a DAG architecture with $n$ nodes,
$r$ choices of operations on each node, and where each potential forward edge 
($\frac{n(n-1)}{2}$ total) was chosen with probability $\frac{2k}{n(n-1)}$ 
(so that the expected number of edges is $k$).

\begin{theorem}[\textbf{informal}] \label{thm:path_length_informal}
Given integers $r, c>0$, there exists an $N$ such that
$\forall~n>N$, there exists a set of $n$ paths $\mathcal{P}'$
such that the probability that $G_{n,n+c,r}$ contains
a path not in $\mathcal{P}'$ is less than $\frac{1}{n^{2}}$.
\end{theorem}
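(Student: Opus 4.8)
The plan is to build $\mathcal{P}'$ out of the \emph{shortest} paths and to union-bound the probability that any longer path appears. Write $p_e = \frac{2k}{n(n-1)}$ with $k = n+c$ for the edge probability, so that $p_e = \Theta(1/n)$ (indeed $\tfrac2n < p_e \le \tfrac3n$ for $n$ large), and call a path \emph{of length $s$} if it passes through $s$ intermediate operation nodes, hence uses $s+1$ edges; there are at most $r^s$ distinct such paths. I would first work in the unconditioned product measure, reintroducing the rejection step~(4) only at the end. Let $\mathcal{P}'$ consist of all paths of length at most $L$, where $L$ is the largest integer with $\sum_{s=0}^{L} r^s \le n$; this forces $|\mathcal{P}'| \le n$ (pad up to exactly $n$ if desired) and $L = \log_r n - O(1)$, so the ``bad'' paths are exactly those of length $> L$.

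For a single path $p$ of length $s$, the event that $G$ contains $p$ is a union over embeddings of $p$ into increasing node-tuples, so the first-moment bound gives
\[
\Pr[p \subseteq G] \le \binom{n-2}{s}\, p_e^{\,s+1}\, r^{-s},
\]
where $r^{-s}$ accounts for the $s$ intermediate nodes carrying the prescribed operations. Using $\binom{n-2}{s} \le n^s/s!$ and $p_e \le \tfrac3n$, this is at most $\frac{3}{n}\cdot\frac{(3/r)^s}{s!}$. Summing over the $\le r^s$ paths of length $s$ collapses the $r^{-s}$ and $r^s$ factors, yielding a per-length total of $O\!\left(\frac{1}{n}\cdot\frac{3^s}{s!}\right)$, and hence
\[
\Pr[\exists\,\text{bad path}] \;\le\; \sum_{s>L} O\!\left(\tfrac{1}{n}\cdot\tfrac{3^s}{s!}\right) \;=\; \frac{O(1)}{n}\sum_{s>L}\frac{3^s}{s!}.
\]

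The crux is that the tail $\sum_{s>L} 3^s/s!$ of the series for $e^3$ decays \emph{super-polynomially} in $n$. Since $L = \log_r n - O(1)$, the tail is dominated by $\frac{3^{L+1}}{(L+1)!}$, and Stirling gives $\log\big((L+1)!\big) = \Theta(L\log L) = \Theta(\log n\cdot \log\log n)$, which outgrows $k\log n$ for every fixed $k$; thus $\sum_{s>L}3^s/s! \le n^{-3}$ once $n$ is large, giving an unconditioned bound of $O(n^{-4})$. Finally I reintroduce the conditioning: the occurrence of \emph{any} path already forces connectivity, so $\Pr[\text{bad}\mid\text{connected}] = \Pr[\text{bad}]/\Pr[\text{connected}]$, and $\Pr[\text{connected}] \ge \Pr[\text{edge }(1,n)] = p_e > \tfrac2n$. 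Dividing inflates the estimate by a factor $O(n)$, leaving a conditional probability of $O(n^{-3}) < n^{-2}$ for all $n$ beyond some $N$, which is the claimed bound.

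The main obstacle, and where I would spend the most care, is the three-way balancing act in the exponent: I simultaneously pay a polynomial factor $n$ for the conditioning inflation and sum $r^s$ equally likely paths at each length, gaining only from the factorial denominator $s!$. The argument closes precisely because the cutoff $L \approx \log_r n$ makes $(L+1)!$ super-polynomial, so the factorial decisively beats both the conditioning factor and any polynomial target $n^{-2}$; pinning down the constants in $L$, $p_e$, and the $\binom{n-2}{s}$ estimate (and verifying $|\mathcal{P}'|\le n$) is the routine but delicate part. A secondary point is justifying the first-moment bound despite overlapping embeddings, but this is harmless since I only need an upper bound on a union.
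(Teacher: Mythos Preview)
Your proposal is correct and follows essentially the same route as the paper: take $\mathcal{P}'$ to be all paths of length at most roughly $\log_r n$, bound the expected number of longer paths in the unconditioned model by a first-moment/union bound, show this sum is $O(n^{-3})$ via the super-polynomial growth of the factorial at the cutoff, and then absorb an $O(n)$ inflation from conditioning on the existence of any path (lower-bounded by the probability of the single edge $(1,n)$). The only cosmetic differences are that you parameterize by the number $s$ of intermediate nodes rather than the number $\ell=s+1$ of edges, and you use $\binom{n-2}{s}\le n^s/s!$ together with Stirling on $(L+1)!$ where the paper uses $\binom{n-2}{\ell-1}\le (e(n-2)/(\ell-1))^{\ell-1}$ together with the identity $(\log n)^{\log n}=n^{\log\log n}$; these are equivalent estimates.
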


For the formal statement and full proof, see 
Appendix~\ref{app:methodology}.
This theorem says that when $n$ is large enough, with high probability, 
we can truncate the path encoding to a size of just $n$ without losing information.
Although the asymptotic nature of this result makes it a proof of concept, 
we empirically show in Figure~\ref{fig:path_encoding} that in BANANAS running on
NASBench-101,
the path encoding can be truncated from its full size of 
$\sum_{i=0}^5 3^i=364$ bits to a length of just \emph{twenty} bits,
without a loss in performance. (The exact experimental setup for this result is 
described later in this section.)
In fact, the performance after truncation actually \emph{improves} up to a certain
point. We believe this is because with the full-length encoding, 
the neural predictor overfits to very rare paths.
In Appendix~\ref{app:experiments},
we show a similar result for NASBench-201 \cite{nasbench201}:
the full path encoding length of $\sum_{i=0}^3 5^i=156$ can be truncated to just 30, 
without a loss of performance.

\paragraph{Neural predictors.}
Now we study the neural predictor, a crucial component in the BO + neural predictor
framework. 
Recall from the previous section that a neural predictor is a neural network 
that is repeatedly trained on the current set of evaluated 
neural architectures and predicts the
accuracy of unseen neural architectures. 
Prior work has used GCNs~\cite{shi2019multi, ma2019deep}
or VAE-based architectures~\cite{dvae} for this task.
We evaluate the performance of standard feedfoward
neural networks with either the adjacency matrix or path-based encoding,
compared to VAEs and GCNs in predicting the validation accuracy 
of neural architectures.
The feedforward neural network we use is a sequential fully-connected
network with 10 layers of width 20, 
the Adam optimizer with a learning rate of $0.01$, 
and the loss function set to mean absolute error (MAE).
We use open-source implementations of the GCN~\cite{zhang2020neural}
and VAE~\cite{dvae}. 
See Appendix~\ref{app:experiments}
for a full 
description of our implementations.

In Figure~\ref{fig:neural_predictor} (left), we compare the different neural
predictors by training them on a set of neural architectures drawn i.i.d.\ from
NASBench-101, along with validation accuracies, and then computing the 
MAE on a held-out test set of size 1000. 
We run 50 trials for different training set sizes
and average the results.
The best-performing neural predictors are the feedforward network with the path encoding
(with and without truncation) and the GCN.
The feedforward networks also had shorter runtime compared to the GCN and VAE,
however, the runtime of the full NAS algorithm is dominated by evaluating neural architectures,
not by training neural predictors.

\begin{figure*}[ht]
\centering
\includegraphics[width=0.32\textwidth]{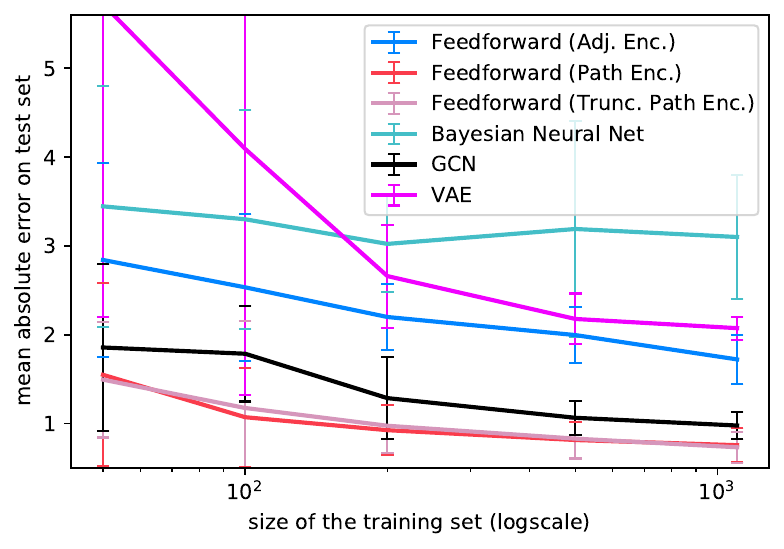}
\includegraphics[width=0.32\textwidth]{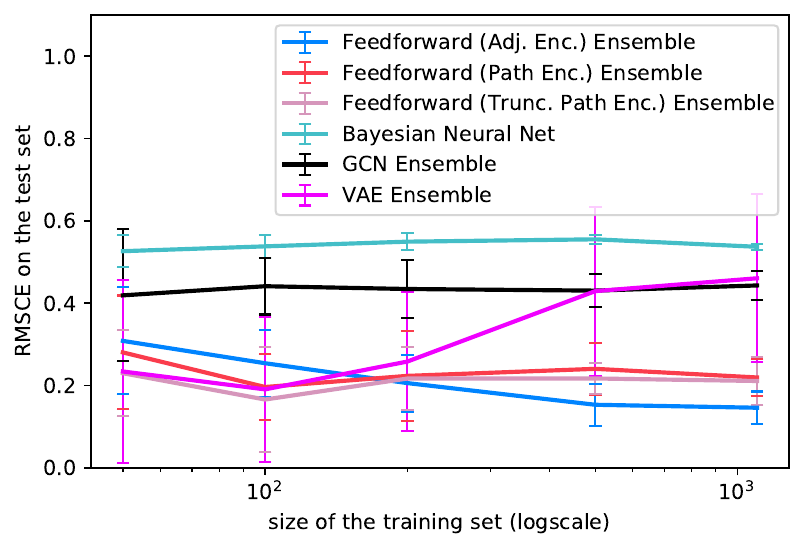}
\includegraphics[width=0.32\textwidth]{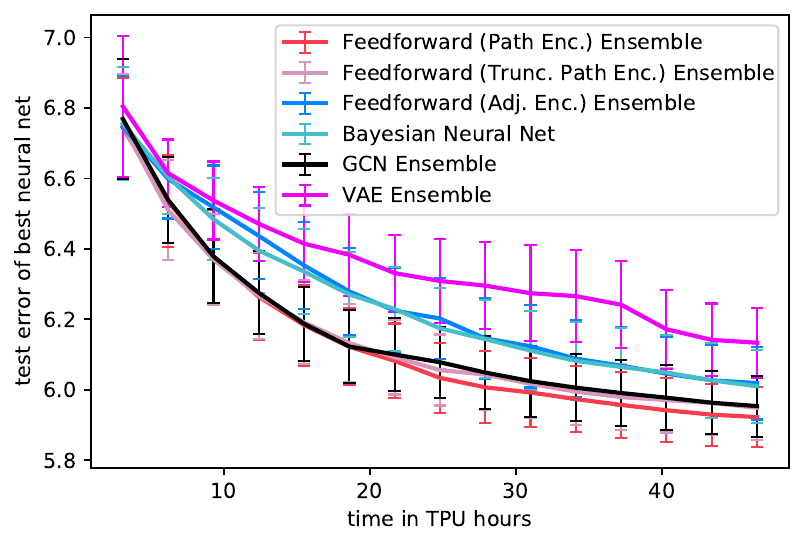}
\caption{Performance of neural predictors on NASBench-101:
predictive ability (left), accuracy of uncertainty estimates (middle),
performance in NAS when combined with BO (right). 
}
\label{fig:neural_predictor}
\end{figure*}


\paragraph{Uncertainty calibration.}
In the previous section, we evaluated standalone neural predictors. 
To incorporate them within BO, for any datapoint, neural predictors need to output 
both a prediction and an uncertainty estimate for that prediction.
Two popular ways of achieving uncertainties are by using a Bayesian neural
network (BNN), or by using an ensemble of neural predictors.
In a BNN, we infer a posterior distribution over network weights.
It has been demonstrated recently that accurate prediction and uncertainty 
estimates in neural networks can be achieved using Hamiltonian 
Monte Carlo~\cite{springenberg2016bayesian}.
In the ensemble approach, we train $m$ neural predictors 
using different random weight initializations and training set orders.
Then for any datapoint, we can can compute the mean and standard
deviation of these $m$ predictions.
Ensembles of neural networks, even of size three and five,
have been shown in some cases to give more reliable uncertainty 
estimates than other leading approaches such as BNNs~\cite{lakshminarayanan2017simple, beluch2018power, choi2016ensemble, snoek2019can, zaidi2020neural}.

We compare the uncertainty estimate of a BNN with an ensemble of size
five for each of the neural predictors described in the previous section.
We use the BOHAMIANN implementation for the BNN~\cite{springenberg2016bayesian}, 
and to ensure a fair comparison with the ensembles, 
we train it for five times longer.
The experimental setup is similar to the previous section, but we compute
a standard measure of calibration:
root mean squared calibration error (RMSCE) on the test set
\cite{kuleshov2018accurate, tran2020methods}.
See Figure~\ref{fig:neural_predictor} (middle).
Intuitively, the RMSCE is low if a method yields a well-calibrated predictive estimate
(i.e. predicted coverage of intervals equals the observed coverage).
All ensemble-based predictors yielded better uncertainty estimates than 
the BNN, consistent with prior work. 
Note that RMSCE only measures the quality of uncertainty estimates, agnostic to 
prediction accuracy.
We must therefore look at prediction (Figure~\ref{fig:neural_predictor} left) and
RMSCE (Figure~\ref{fig:neural_predictor} middle) together when evaluating the neural predictors.

Finally, we evaluate the performance of each neural predictor within the full
BO + neural predictor framework. We use the approach described in
Section~\ref{sec:preliminaries}, using independent Thompson sampling and mutation
for acquisition optimization (described in more detail in the next section).
Each algorithm is given a budget of 47 TPU hours, or about 150 neural architecture
evaluations on NASBench-101. That is, there are 150 iterations of training a
neural predictor and choosing a new architecture to evaluate using the
acquisition function.
The algorithms output 10 architectures in each iteration of BO for better
parallelization, as described in the previous section.
After each iteration, we return the test error of the architecture with
the best validation error found so far. We run 200 trials of each algorithm and
average the results. 
This is the same experimental setup as in Figure~\ref{fig:path_encoding},
as well as experiments later in this section and the next section.
See Figure~\ref{fig:neural_predictor} (right).
The two best-performing neural predictors are an ensemble of GCNs, and an
ensemble of feedforward neural networks with the path encoding, with the
latter having a slight edge. 
The feedforward network is also desirable because it
requires less hyperparameter tuning than the GCN.


\paragraph{Acquisition functions and optimization.}

Now we analyze the BO side of the framework, namely, 
the choice of acquisition function and acquisition optimization.
We consider four common acquisition functions that can be computed using a mean and 
uncertainty estimate for each input datapoint:
expected improvement (EI) \cite{movckus1975bayesian},
probability of improvement (PI) \cite{kushner1964new},
upper confidence bound (UCB) \cite{srinivas2009gaussian},
and Thompson sampling (TS) \cite{thompson1933likelihood}.
We also consider a variant of TS called independent Thompson sampling.
First we give the formal definitions of each acquisition function.

Suppose we have trained an ensemble of $M$ predictive models,
$\{f_m\}_{m=1}^M$,
where $f_m : A \rightarrow \mathbb{R}$.
Let $y_{\min}$ denote the lowest validation error of an architecture discovered so
far.
Following previous work \cite{neiswanger2019probo},
we use the following acquisition function estimates for an
input architecture $a \in A$:
\begin{align}
    \phi_\text{EI}(a) &= \mathbb{E} \left[ 
    \mathds{1}\left[ f_m(a) > y_\text{min} \right]
    \left(y_\text{min} - f_m(a) \right) 
    \right] \\
    &= \int_{-\infty}^{y_\text{min}} 
    \left(y_\text{min} - y \right)
    \mathcal{N}\left(\hat{f}, \hat{\sigma}^2 \right)
    dy \nonumber 
\end{align}
\begin{align}    
    \phi_\text{PI}(x) &= \mathbb{E} \left[
    \mathds{1}\left[ f_m(x) > y_\text{min} \right]
    \right] \\
    &= \int_{-\infty}^{y_\text{min}} 
    \mathcal{N}\left(\hat{f}, \hat{\sigma}^2 \right)
    dy \nonumber\\
    \phi_\text{UCB}(x) &= 
    \hat{f} - \beta \hat{\sigma} \\
    \phi_\text{TS}(x) &= f_{\tilde{m}}(x),
        \hspace{2mm}
        \tilde{m} \sim \text{Unif}\left(1,M\right)\\
    \phi_\text{ITS}(x) &= \tilde{f}_x(x),
        \hspace{2mm}
        \tilde{f}_x(x) \sim \mathcal{N}(\hat{f}, \hat{\sigma}^2)
\end{align}
In these acquisition function definitions, 
$\mathds{1}(x) = 1$ if $x$ is true and $0$ 
otherwise, and we are making a normal
approximation for our model's posterior predictive
density, where we estimate parameters 
\begin{equation*}
\hat{f} = \frac{1}{M}\sum_{m=1}^M f_m(x), \text{ and }
\hat{\sigma} = \sqrt{\frac{\sum_{m=1}^M (f_m(x) - \hat{f})^2}{M-1}}. 
\end{equation*}

In the UCB acquisition function experiments, we set the tradeoff parameter $\beta=0.5$.
We tested each acquisition function within the BO + neural predictor framework,
using mutation for acquisition optimization and
the best neural predictor from the previous section - an ensemble of
feedforward networks with the path encoding. 
The experimental setup is the same as in previous sections.
See Figure~\ref{fig:nasbench_main} (left).
We see that the acquisition function does not have as big an effect on performance
as other components, though ITS performs the best overall.
Note also that both TS and ITS have advantages when running parallel
experiments, since they are stochastic acquisition functions that can be directly
applied in the batch BO setting~\cite{kandasamy2018parallelised}.

Next, we test different acquisition optimization strategies.
In each iteration of BO, our goal is to find the neural architecture from the search 
space which minimizes the acquisition function.
Evaluating the acquisition function for every neural architecture
in the search space is computationally infeasible.
Instead, we create a set of $100$-$1000$ architectures
(potentially in an iterative fashion) and choose the architecture with
the value of the acquisition function in this set.

The simplest strategy is to draw 1000 random architectures.
However, it can be beneficial to generate a set of architecture that are close in
edit distance to architectures in the training set, since the neural predictor is
more likely to give accurate predictions to these architectures.
Furthermore, local optimization methods such as mutation, evolution, and local
search have been shown to be effective for acquisition 
optimization~\citep{balandat2019botorch, nasbot, wilson2018maximizing}.
In ``mutation'', we simply mutate the architectures
with the best validation accuracy that we have found so far by randomly changing
one operation or one edge.
In local search, we iteratively take the architectures with the current highest
acquisition function value, and compute the acquisition function of all 
architectures in their neighborhood.
In evolution, we iteratively maintain a population by mutating the architectures
with the highest acquisition function value and killing the architectures with the
lowest values.
We give the full details of these methods in 
Appendix~\ref{app:experiments}.
The experimental setup is the same as in the previous sections.
See Figure~\ref{fig:nasbench_main} (middle).  
We see that mutation performs the best, which indicates that it is better to consider
architectures with edit distance closer to the set of already evaluated architectures.


\begin{figure}
\centering
    \includegraphics[width=.7\textwidth]{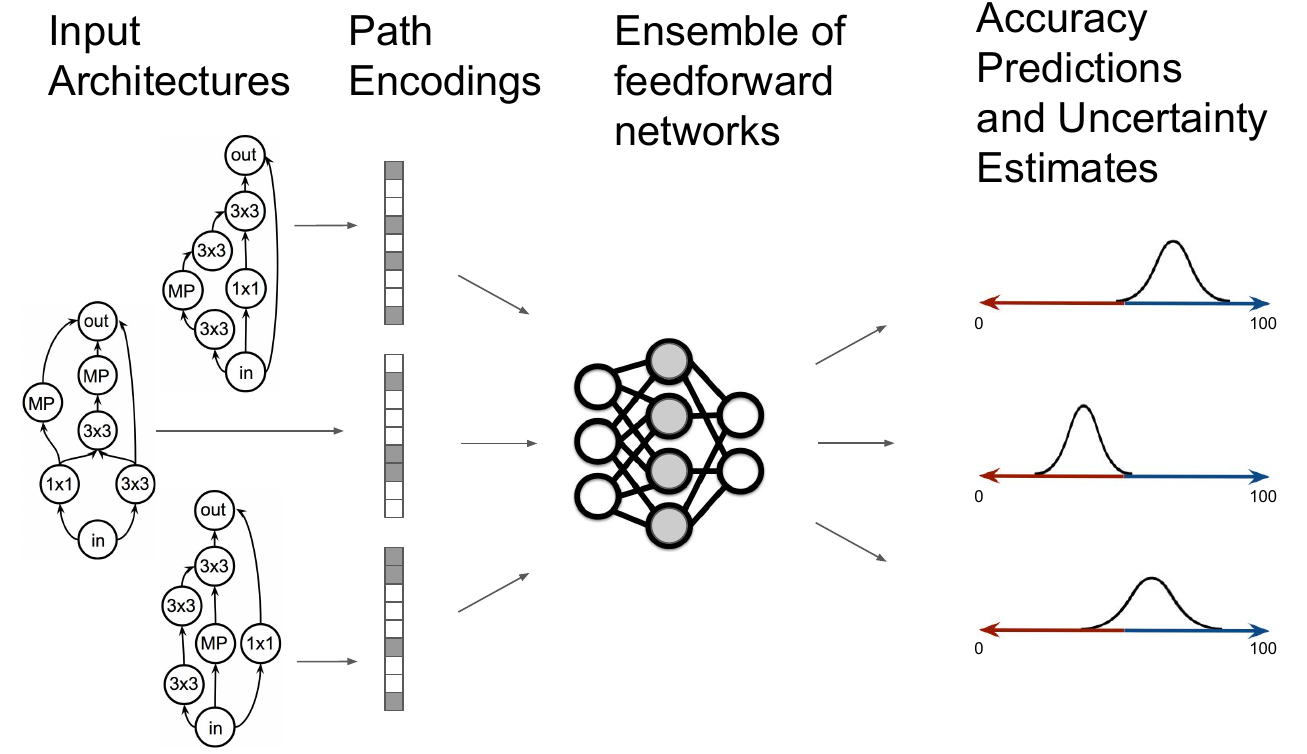}
    \caption{Diagram of the BANANAS neural predictor.}
    \label{fig:bananas_diagram}
\end{figure}

\paragraph{BANANAS: Bayesian optimization with neural architectures for NAS.}
Using the best components from the previous sections, we construct our full NAS algorithm,
BANANAS, composed of an ensemble of feedforward neural networks using the path encoding,
ITS, and a mutation acquisition function.
See Algorithm~\ref{alg:bananas} and Figure~\ref{fig:bananas_diagram}.
Note that in the previous sections, we conducted experiments on each component
individually while keeping all other components fixed.
In Appendix~\ref{app:experiments},
we give further analysis varying all components at
once, to ensure that BANANAS is indeed the optimal instantiation of this framework. 

For the loss function in the neural predictors, 
we use mean absolute percentage error (MAPE) because it gives a higher weight to
architectures with lower validation losses:
\begin{equation}
\mathcal{L}(y_\text{pred}, y_\text{true}) =
    \frac{1}{n}\sum_{i=1}^n \left| \frac{y_\text{pred}^{(i)} - y_\text{LB}}{y_\text{true}^{(i)} - y_\text{LB}} - 1\right|,\label{eq:mape}
\end{equation}
where $y_\text{pred}^{(i)}$ and $y_\text{true}^{(i)}$ are the predicted and true values of the validation error for architecture $i$, and $y_\text{LB}$ is a global lower bound on the minimum true validation error.
To parallelize Algorithm~\ref{alg:bananas}, in step iv.\ we simply choose the $k$ architectures with the smallest values of the acquisition function and evaluate the architectures in parallel.

\begin{algorithm}
\caption{BANANAS}\label{alg:bananas}
\begin{algorithmic} 
\STATE {\bfseries Input:} Search space $A$, dataset $D$, parameters $t_0,~T,~M,~c,~x$, acquisition function $\phi$, function $f(a)$ returning validation error of $a$ after training.
\STATE 1. Draw $t_0$ architectures $a_0, \dots, a_{t_0}$ uniformly at random from $A$ and train them on $D$.
\STATE 2. For $t$ from $t_0$ to $T$,
\begin{enumerate}[label=\roman*., itemsep=1pt, parsep=1mm, topsep=1pt, leftmargin=8mm]
    \item 
Train an ensemble of neural predictors on $\{ (a_0, f(a_0)), \dots, (a_t, f(a_t)) \}$
using the path encoding to represent each architecture.
\item
Generate a set of $c$ candidate architectures from $A$ by
randomly mutating the $x$ architectures $a$ from $\{a_0, \dots, a_t\}$
that have the lowest value of $f(a)$.
\item 
For each candidate architecture $a$, evaluate the acquisition function $\phi(a)$.
\item
Denote $a_{t+1}$ as the candidate architecture with minimum $\phi(a)$, and evaluate $f(a_{t+1})$.
\end{enumerate}
\STATE {\bfseries Output:} $a^*=\text{argmin}_{t=0,\ldots,T} f(a_t)$.
\end{algorithmic}
\end{algorithm}

\section{BANANAS Experiments}
\label{sec:experiments}

In this section, we compare BANANAS to many other popular NAS algorithms
on three search spaces.
To promote reproducibility, we discuss our adherence to the 
NAS research checklist~\cite{lindauer2019best} in 
Appendix~\ref{app:checklist}.
In particular, we release our code, we use
a tabular NAS dataset, and we run many trials of each algorithm.

We run experiments on NASBench-101 described in the previous section, 
as well as NASBench-201 and the DARTS search space.
The NASBench-201 dataset~\citep{yang2019evaluation} consists of
$15625$ neural architectures with precomputed validation and test
accuracies for 200 epochs on CIFAR-10, CIFAR-100, and ImageNet-16-120.
The search space consists of a complete directed acyclic graph on 4 nodes,
and each edge can take on five different operations.
The DARTS search space~\citep{darts} 
is size $10^{18}$. It consists of two cells: 
a convolutional cell and a reduction cell.
Each cell has four nodes that have two incoming edges which
take on one of eight operations. 

\begin{figure*}[t]
\centering
\includegraphics[width=0.32\textwidth]{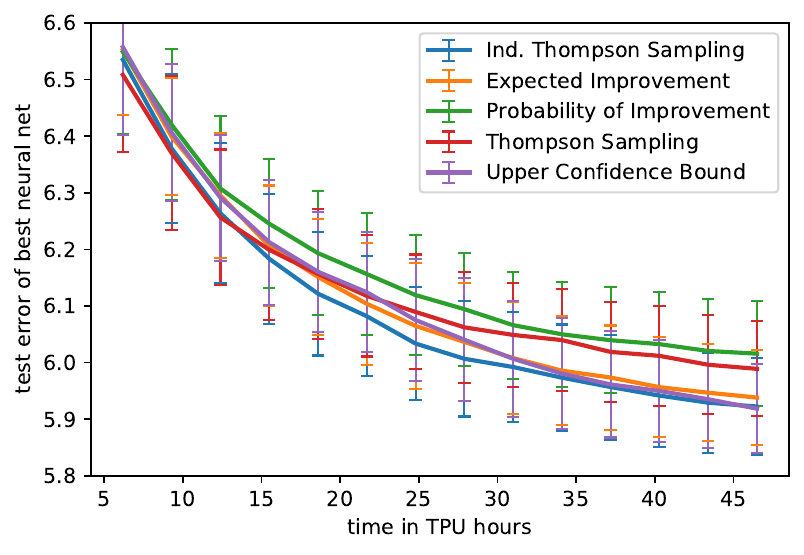}
\includegraphics[width=0.32\textwidth]{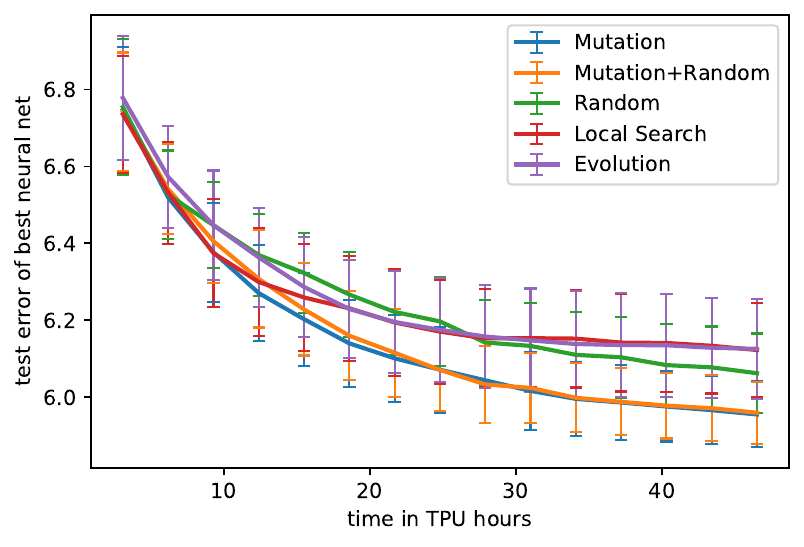}
\includegraphics[width=0.32\textwidth]{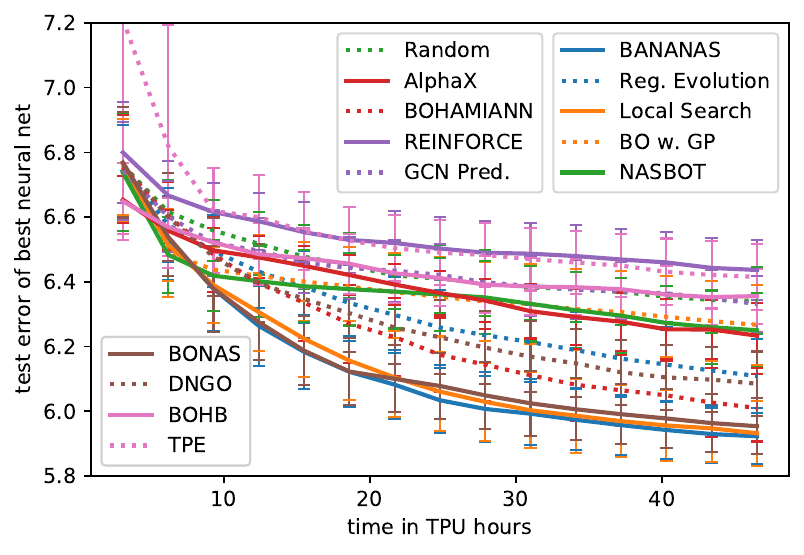}
\caption{Performance of different acquisition functions (left).
Performance of different acquisition optimization strategies (middle).
Performance of BANANAS compared to other NAS algorithms (right). 
See Appendix~\ref{app:experiments}
for the same results in a table.}
\label{fig:nasbench_main}
\end{figure*}

\begin{table*}[t]
\caption{Comparison of NAS algorithms on the DARTS search space. 
The runtime unit is total GPU-days on a Tesla V100.}
\setlength\tabcolsep{0pt}
\begin{tabular*}{\textwidth}{l @{\extracolsep{\fill}}*{8}{S[table-format=1.4]}} 
\toprule
\multicolumn{1}{c}{NAS Algorithm} & \multicolumn{1}{c}{Source} & \multicolumn{1}{c}{Avg. Test error} & \multicolumn{1}{c}{Runtime} & \multicolumn{1}{c}{Method} \\
\midrule
Random search & \cite{darts} & 3.29  & 4 & \text{Random} \\
Local search & \cite{white2020local} & 3.49  & 11.8 & \text{Local search} \\
DARTS & \cite{darts} & 2.76  & 5 & \text{Gradient-based} \\
ASHA & \cite{randomnas} & 3.03 & 9 & \text{Successive halving} \\
Random search WS & \cite{randomnas} & 2.85 & 9.7 & \text{Random} \\
\hline
DARTS & \hspace{-0.5mm}Ours & 2.68 & 5 & \text{Gradient-based} \\
ASHA & \hspace{-0.5mm}Ours & 3.08 & 9 & \text{Successive halving} \\
BANANAS & \hspace{-0.5mm}Ours &  \hspace{-3.4mm}\textbf{2.64} & 11.8 & \text{BO + neural predictor} \\
\bottomrule
\end{tabular*} 
\label{table:darts}
\end{table*} 

\paragraph{Performance on NASBench search spaces.}
We compare BANANAS to the most popular NAS algorithms from a variety of paradigms:
random search~\cite{randomnas},
regularized evolution~\cite{real2019regularized}, 
BOHB~\cite{bohb},
NASBOT~\cite{nasbot},
local search~\cite{white2020local},
TPE~\cite{tpe},
BOHAMIANN~\cite{springenberg2016bayesian},
BONAS~\cite{shi2019multi},
REINFORCE~\cite{reinforce}, 
GP-based BO~\cite{snoek2012practical}, 
AlphaX~\cite{alphax}, 
GCN Predictor~\cite{wen2019neural}, and
DNGO~\cite{snoek2015scalable}.
As much as possible, we use the code directly from the open-source repositories,
without changing the hyperparameters (but with a few exceptions).
For a description of each algorithm and 
details of the implementations we used, see 
Appendix~\ref{app:experiments}.

The experimental setup is the same as in the previous section.
For results on NASBench-101, see Figure~\ref{fig:nasbench_main} (right).
The top three algorithms in order, are BANANAS, local search, and BONAS.
In Appendix~\ref{app:experiments},
we also show that BANANAS achieves strong performance on the three
datasets in NASBench-201.

\paragraph{Performance on the DARTS search space.}
We test BANANAS on the search space from DARTS.
Since the DARTS search space is not a tabular dataset, we cannot fairly compare
to other methods which use substantially different training and testing 
pipelines~\cite{lindauer2019best}.
We use a common test evaluation pipeline which is to train for 
600 epochs with cutout and auxiliary tower~\cite{darts, randomnas, yan2020does},
where the state of the art is around 2.6\% on CIFAR-10. 
Other papers use different test evaluation settings 
(e.g., training for many more epochs) 
to achieve lower error, but they cannot be fairly
compared to other algorithms.

In our experiments,
BANANAS is given a budget of 100 evaluations. In each evaluation, 
the chosen architecture is trained for 50 epochs and the average
validation error of the last 5 epochs is recorded.
To ensure a fair comparison by controlling all hyperparameter settings
and hardware, we re-trained the architectures from prior work
when they were available.
In this case, we report the mean test error over five random seeds of the best architecture
found for each method.
We compare BANANAS to DARTS~\cite{darts}, random search~\cite{darts}, 
local search~\cite{white2020local}, and ASHA~\cite{randomnas}.
See Table~\ref{table:darts}.


Note that a new surrogate benchmark on the DARTS search space~\cite{nasbench301}, 
called NASBench-301 was recently introduced, 
allowing for fair and computationally feasible experiments.
Initial experiments showed~\cite{nasbench301}
that BANANAS is competitive with nine other popular NAS algorithms, 
including DARTS~\cite{darts} and two improvements of DARTS~\cite{pcdarts, gdas}.

\section{Conclusion and Future Work}
\label{sec:conclusion}

We conduct an analysis of the BO + neural predictor framework,
which has recently emerged as a high-performance framework for NAS.
We test several methods for each main component: the encoding, neural predictor,
calibration method, acquisition function, and acquisition
optimization strategy.
We also propose a novel path-based encoding scheme, which improves the performance
of neural predictors.
We use all of this analysis to develop BANANAS, an instantiation
of the BO + neural predictor framework which achieves state-of-the-art performance
on popular NAS search spaces.
Interesting follow-up ideas are to develop multi-fidelity or
successive halving versions of BANANAS. 
Incorporating these approaches with BANANAS could result in a 
significant decrease in the runtime without sacrificing accuracy.

\section*{Acknowledgments}
We thank Jeff Schneider, Naveen Sundar Govindarajulu, and Liam Li for their help with this project.

\newpage

\bibliography{main}

\newpage

\appendix

\section{Related Work Continued}
\label{app:relatedwork}

\paragraph{Bayesian optimization.}
Bayesian optimization is a leading technique for zeroth order optimization
when function queries are expensive \cite{gpml, frazier2018tutorial},
and it has seen great success in hyperparameter optimization for deep
learning \cite{gpml, vizier, hyperband}. The majority of Bayesian optimization
literature has focused on Euclidean or categorical input domains, and has used a
GP model \cite{gpml, vizier, frazier2018tutorial, snoek2012practical}.
There are techniques for parallelizing Bayesian optimization 
\cite{gonzalez2016batch, kandasamy2018parallelised, ovcenavsek2000parallel}.

There is also prior work on using neural network models in
Bayesian optimization for hyperparameter optimization
\cite{snoek2015scalable, springenberg2016bayesian}.
The goal of these papers is to improve the efficiency of 
Gaussian Process-based Bayesian optimization from cubic to linear time,
not to develop a different type of prediction model in order to
improve the performance of BO with respect to the number of iterations. 
In our work, we present techniques which deviate
from Gaussian Process-based Bayesian optimization and see a 
performance boost with respect to the number of iterations.

\paragraph{Neural architecture search.}
Neural architecture search has been studied since at least the 1990s \cite{floreano2008neuroevolution, kitano1990designing, stanley2002evolving},
but the field was revitalized in 2017~\cite{zoph2017neural}. 
Some of the most popular techniques for NAS include evolutionary algorithms \cite{amoebanet, maziarz2018evolutionary}, reinforcement learning \cite{zoph2017neural, enas, pnas, efficientnets, wang2019sample}, Bayesian optimization \cite{nasbot, auto-keras, bayesnas}, gradient descent \cite{darts, darts+, prdarts}, tree search~\cite{alphax,wang2019sample}, 
and neural predictors~\cite{shi2019multi, wen2019neural}.
For a survey on NAS, see \cite{nas-survey}.

Recent papers have called for fair and reproducible experiments~\cite{randomnas,  nasbench}.
In this vein, the NASBench-101~\cite{nasbench}, -201~\cite{nasbench201},
and -301~\cite{nasbench301} datasets 
were created, which contain tens of thousands of pretrained neural architectures.

Initial BO approaches for NAS defined a distance function between 
architectures~\cite{nasbot, auto-keras}.
A few recent papers have used Bayesian optimization with a graph neural
network as a predictor~\cite{ma2019deep, shi2019multi},
however, they do not conduct an ablation study of all components of the 
framework.
In this work, we do not claim to invent the BO + neural predictor framework,
however, we give the most in-depth analysis that we are aware of, which we
use to design a high-performance instantiation of this framework.

\paragraph{Predicting neural network accuracy.}
There are several approaches for predicting the validation accuracy of 
neural networks, such as a layer-wise encoding of neural networks with an 
LSTM algorithm~\cite{peephole}, and a layer-wise encoding and dataset 
features to predict the accuracy for neural network and dataset pairs~\cite{tapas}. 
There is also work in predicting the learning curve of neural networks for hyperparameter 
optimization~\cite{klein2016learning, domhan2015speeding} 
or NAS~\cite{baker2017accelerating} using Bayesian techniques. 
None of these methods have predicted the accuracy of neural networks 
drawn from a cell-based DAG search space such as NASBench or the DARTS 
search space.
Another recent work uses a hypernetwork for neural network prediction 
in NAS~\cite{zhang2018graph}.
Other recent works for predicting neural network accuracy include AlphaX~\cite{alphax},
and three papers which use GCN's to predict neural network 
accuracy~\cite{ma2019deep, shi2019multi, wen2019neural}.

Ensembling of neural networks is a popular approach for uncertainty
estimates, shown in many settings to be more effective than all other methods such as
Bayesian neural networks even for an ensemble of size five~\cite{lakshminarayanan2017simple, beluch2018power, choi2016ensemble, snoek2019can}.

\paragraph{Subsequent work.}
Since its release, several papers have independently shown that BANANAS is a
competitive algorithm for NAS~\citep{remaade, nasbench301,nguyen2020optimal, nasbowl, npenas}.
For example, one paper shows that BANANAS outperforms other algorithms
on NASBench-101
when given a budget of 3200 evaluations~\citep{remaade},
and one paper shows that BANANAS outperforms many popular NAS 
algorithms on NASBench-301~\citep{nasbench301}.
Finally, a recent paper conducted a study on several encodings used for 
NAS~\cite{white2020study}, concluding that neural predictors perform well
with the path encoding, and also improved upon the theoretical results presented in
Section~\ref{sec:methodology}.

\section{Preliminaries Continued}
\label{app:preliminaries}
We give background information on three 
key ingredients of NAS algorithms.
\paragraph{Search space.}
Before deploying a NAS algorithm, we must define the space of neural networks that the algorithm can search through.
Perhaps the most common type of search space for NAS is a \emph{cell-based search space} \cite{zoph2017neural, enas, darts, randomnas, sciuto2019evaluating, nasbench}.
A \emph{cell} consists of a
relatively small section of a neural network, usually 6-12
nodes forming a directed acyclic graph (DAG). 
A neural architecture is then built by
repeatedly stacking one or two different cells on top of each
other sequentially, possibly separated by specialized layers. The
layout of cells and specialized layers is called a
\emph{hyper-architecture}, and this is fixed, while the NAS
algorithm searches for the best cells. The search space over
cells consists of all possible DAGs of a certain size, 
where each node can be one of several operations such as $1\times 1$ convolution, $3\times 3$ convolution, or $3\times 3$ max pooling. 
It is also common to set a restriction on the number of total edges or
the in-degree of each node \cite{nasbench, darts}.
In this work, we focus on NAS over convolutional cell-based search spaces,
though our method can be applied more broadly.

\paragraph{Search strategy.}
The search strategy is the optimization method that the algorithm uses to find the optimal or near-optimal neural architecture from the search space.
There are many varied search strategies, such as Bayesian optimization, evolutionary search, reinforcement learning, and gradient descent. 
In Section \ref{sec:methodology}, we introduced the search strategy we study in this
paper: Bayesian optimization with a neural predictor.

\paragraph{Evaluation method.}
Many types of NAS algorithms consist of an iterative framework in which the algorithm
chooses a neural network to train, computes its validation error, and uses this result
to guide the choice of neural network in the next iteration. The simplest instantiation
of this approach is to train each neural network in a fixed way, i.e., the algorithm
has black-box access to a function that trains a neural network for $x$ epochs and then
returns the validation error.
Algorithms with black-box
evaluation methods can be compared by returning the architecture with the lowest 
validation error after a certain number of queries to the black-box function. There
are also multi-fidelity methods, for example, when a NAS algorithm chooses the number of
training epochs in addition to the architecture.

\section{Path Encoding Theory}\label{app:methodology}

In this section, we give the full details of Theorem~\ref{thm:path_length_informal}
from Section~\ref{sec:methodology}, which shows that with high probability, 
the path encoding can be truncated significantly without losing information.

Recall that the size of the path encoding is equal to the
number of unique paths, which is $\sum_{i=0^n}r^i$, where
$n$ is the number of nodes in the cell, and $r$ is the
number of operations to choose from at each node.
This is at least $r^n$.
By contrast, the adjacency matrix encoding scales quadratically
in $n$.

However, the vast majority of the paths rarely show up in any neural architecture 
throughout a full run of a NAS algorithm.
This is because many NAS algorithms can only sample architectures from a random
procedure or mutate architectures drawn from the random procedure.
Now we will give the full details of Theorem~\ref{thm:path_length_informal},
showing that the vast majority of paths
have a very low probability of occurring in a cell outputted from \texttt{random\_spec()}, 
a popular random procedure used by many NAS algorithms~\cite{nasbench},
including BANANAS.
Our results show that by simply truncating the least-likely paths,
the path encoding scales \emph{linearly} in the size of the cell,
with an arbitrarily small amount of information loss.
We back this up with experimental evidence in 
Figures~\ref{fig:path_encoding} and~\ref{fig:extras}, and 
Table~\ref{tab:nasbench-probs}.

We start by defining \texttt{random\_spec()}, the procedure to output a random
neural architecture.

\begin{definition}
\label{def:random_graph}
Given integers $n, r$, and $k<\nicefrac{n(n-1)}{2}$,
a random graph $G_{n, k, r}$ is generated as follows:
\emph{(1)} Denote $n$ nodes by 1 to $n$. 
\emph{(2)} Label each node randomly with one of $r$ operations.
\emph{(3)} For all $i<j$, add edge $(i,j)$ with probability $\frac{2k}{n(n-1)}$.
\emph{(4)} If there is no path from nodes 1 to $n$, \texttt{goto} \texttt{(1)}.
\end{definition}

The probability value in step (3) is chosen so that the expected
number of edges after this step is exactly $k$.
Recall that we use `path' to mean a path from node 1
to node $n$.
We restate the theorem formally.
Denote $\mathcal{P}$ as the set of all possible paths from 
node 1 to node $n$ that could occur in $G_{n,k,r}$.

\noindent\textbf{Theorem~\ref{thm:path_length_informal}~(formal).}
\emph{
Given integers $r, c>0$, there exists $N$ such that for all
$n>N$, there exists a set of $n$ paths  $\mathcal{P}'\subseteq \mathcal{P}$
such that
\begin{equation*}
P(\exists p\in G_{n,n+c,r}\cap\mathcal{P}\setminus\mathcal{P}')\leq \frac{1}{n^{2}}.
\end{equation*}
}

This theorem says that when $k=n+c$, and when $n$ is large enough
compared to $c$ and $r$,
then we can truncate the path encoding to a set 
$\mathcal{P}'$ of size $n$, 
because the probability that \texttt{random\_spec()} 
outputs a graph $G_{n, k, r}$ 
with a path outside of $\mathcal{P}'$ is very small.

Note that there are two caveats to this theorem.
First, BANANAS may mutate architectures drawn from 
Definition~\ref{def:random_graph}, and Theorem~\ref{thm:path_length_informal}
does not show the probability of paths from mutated architectures is small.
However, our experiments (Figures~\ref{fig:path_encoding} and~\ref{fig:extras})
give evidence that
the mutated architectures do not change the distribution of paths too much.
Second, the most common paths in Definition~\ref{def:random_graph} are not necessarily
the paths whose existence or non-existence give the most entropy in predicting the validation
accuracy of a neural architecture. Again, while this is technically true, our experiments
back up Theorem~\ref{thm:path_length_informal} as a reasonable argument
that truncating the path encoding does not sacrifice performance.

Denote by $G'_{n, k, r}$ the random graph outputted by
Definiton~\ref{def:random_graph} without step (4).
In other words, $G'_{n, k, r}$ is a random graph that
could have no path from node 1 to node $n$.
Since there are $\frac{n(n-1)}{2}$ pairs $(i,j)$ such that
$i<j$, the expected number of edges of $G'_{n, k, r}$ is $k$.
For reference, in the NASBench-101 dataset, there are $n=7$ nodes
and $r=3$ operations, and the maximum number of edges is 9.

We choose $\mathcal{P}'$ as the $n$ shortest paths
from node 1 to node $n$.
The argument for Theorem~\ref{thm:path_length_informal}
relies on a simple concept:
the probability that $G_{n,k,r}$ contains a
long path (length $>\log_r n$) 
is much lower than the probability that it contains a short path.
For example, the probability that
$G'_{n,k,r}$ contains a path of length $n-1$
is very low,
because there are $\Theta(n^2)$ potential edges 
but the expected number of edges is $n+O(1)$.
We start by upper bounding the length of the $n$ shortest paths.

\begin{lemma} \label{lem:path_length}
Given a graph with $n$ nodes and $r$ node labels, 
there are fewer than $n$ paths of length 
less than or equal to $\log_r n - 1$.
\end{lemma}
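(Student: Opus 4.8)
The plan is to count paths directly by their length and bound the total with a geometric series. Here a path from node $1$ to node $n$ is identified with the ordered sequence of operation labels on the intermediate nodes it traverses, so a path of length $\ell$ (one passing through $\ell$ operation-labeled nodes) is specified by choosing one of $r$ labels for each of its $\ell$ nodes. Hence the number of distinct paths of length exactly $\ell$ is $r^\ell$, independent of the particular node indices used, which is precisely why the path encoding can abstract away the node ordering.

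First I would sum this count over all admissible lengths. The integer lengths $\ell$ satisfying $\ell \le \log_r n - 1$ are $\ell \in \{0, 1, \ldots, \lfloor \log_r n\rfloor - 1\}$, so the number of paths in question is
\begin{equation*}
\sum_{\ell=0}^{\lfloor \log_r n\rfloor - 1} r^\ell = \frac{r^{\lfloor \log_r n\rfloor} - 1}{r - 1}.
\end{equation*}
Next I would use the defining property of the logarithm to control the numerator: since $\lfloor \log_r n\rfloor \le \log_r n$, we have $r^{\lfloor \log_r n\rfloor} \le r^{\log_r n} = n$. Substituting this and using $r \ge 2$ (so that $r - 1 \ge 1$, which is the only regime in which $\log_r n$ is meaningful) gives
\begin{equation*}
\frac{r^{\lfloor \log_r n\rfloor} - 1}{r - 1} \le \frac{n - 1}{r - 1} \le n - 1 < n,
\end{equation*}
which is exactly the claimed bound. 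This also pins down the set $\mathcal{P}'$ used in Theorem~\ref{thm:path_length_informal} as the collection of all paths of length at most $\log_r n - 1$, whose cardinality is strictly below $n$.

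I do not expect a substantial obstacle in this argument; the only points requiring care are the exact enumeration of paths of a fixed length as operation \emph{sequences} (rather than as node-index sequences, which would introduce spurious binomial factors and is instead the relevant count in the edge-based analysis of Theorem~\ref{thm:path_length_informal}), and the handling of the floor in $\log_r n$ so that the geometric sum terminates at the correct top exponent $\lfloor \log_r n\rfloor - 1$. The resulting bound is essentially tight, since admitting one more length level would multiply the count by a further factor of $r$ and push it past $n$, which is why the truncation threshold is set at $\log_r n$ in the main theorem.
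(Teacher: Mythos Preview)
Your proof is correct and follows essentially the same approach as the paper's: count $r^\ell$ paths of length $\ell$, sum the geometric series, and bound by $(n-1)/(r-1) < n$. Your use of $\lfloor \log_r n\rfloor$ is in fact slightly cleaner than the paper's $\lceil \log_r n\rceil$, whose displayed equality $r^{\lceil\log_r n\rceil}-1 = n-1$ tacitly assumes $n$ is a power of $r$.
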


\begin{proof}[\textbf{Proof.}]
The number of paths of length $\ell$ is $r^\ell$, since there
are $r$ choices of labels for each node.
Then
\begin{equation*}
1+r+\cdots+r^{\lceil\log_r n\rceil - 1}
=\frac{ r^{\lceil\log_r n\rceil}-1}{r-1}=\frac{n-1}{r-1}<n.
\end{equation*}
\end{proof}

To continue our argument, we will need the following well-known
bounds on binomial coefficients, e.g.\ \cite{stanica2001good}.

\begin{theorem} \label{thm:binomial}
Given $0\leq \ell\leq n$, we have
\begin{equation*}
    \left(\frac{n}{\ell}\right)^\ell \leq \binom{n}{\ell}
    \leq \left(\frac{en}{\ell}\right)^\ell.
\end{equation*}
\end{theorem}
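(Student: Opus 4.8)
The plan is to prove both inequalities by rewriting the binomial coefficient as a product and bounding the factors, treating the lower and upper bounds separately. Throughout I assume $1\leq \ell\leq n$; the case $\ell=0$ is immediate since all three quantities equal $1$ under the convention $x^0=1$.

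First I would write
\begin{equation*}
{n\choose \ell}=\frac{n(n-1)\cdots(n-\ell+1)}{\ell(\ell-1)\cdots 1}
=\prod_{i=0}^{\ell-1}\frac{n-i}{\ell-i}.
\end{equation*}
For the lower bound, I would show that each of the $\ell$ factors satisfies $\frac{n-i}{\ell-i}\geq\frac{n}{\ell}$. Cross-multiplying (all terms are positive since $i\leq \ell-1<\ell\leq n$), this is equivalent to $(n-i)\ell\geq n(\ell-i)$, i.e.\ $ni\geq \ell i$, which holds because $n\geq \ell$ and $i\geq 0$. Multiplying the $\ell$ factors, each at least $n/\ell$, yields $\binom{n}{\ell}\geq (n/\ell)^\ell$.

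For the upper bound, I would use the crude estimate $n(n-1)\cdots(n-\ell+1)\leq n^\ell$, so that $\binom{n}{\ell}\leq n^\ell/\ell!$. It then suffices to prove the factorial lower bound $\ell!\geq (\ell/e)^\ell$. This follows from the exponential series: since
\begin{equation*}
e^\ell=\sum_{k=0}^{\infty}\frac{\ell^k}{k!}\geq \frac{\ell^\ell}{\ell!},
\end{equation*}
we obtain $\ell!\geq \ell^\ell/e^\ell=(\ell/e)^\ell$. Substituting back gives $\binom{n}{\ell}\leq n^\ell\cdot e^\ell/\ell^\ell=(en/\ell)^\ell$, completing the proof.

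This statement is a standard textbook bound, so there is no serious obstacle; the only point requiring a little care is the per-factor comparison $\frac{n-i}{\ell-i}\geq\frac{n}{\ell}$ in the lower bound, where one must verify the direction of the inequality using $n\geq\ell$, and the observation that the upper bound reduces entirely to the well-known factorial estimate $\ell!\geq(\ell/e)^\ell$ extracted from a single term of the series for $e^\ell$.
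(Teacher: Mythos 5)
Your proof is correct, but there is nothing in the paper to compare it against: the paper states this bound as a well-known fact and simply cites \cite{stanica2001good} for it, offering no proof of its own. Your argument is the standard self-contained verification. For the lower bound, the factorization $\binom{n}{\ell}=\prod_{i=0}^{\ell-1}\frac{n-i}{\ell-i}$ together with the per-factor comparison $\frac{n-i}{\ell-i}\geq\frac{n}{\ell}$ (equivalent, after cross-multiplication by positive quantities, to $ni\geq \ell i$, which holds since $n\geq\ell$ and $i\geq 0$) is exactly right. For the upper bound, reducing to the factorial estimate $\ell!\geq(\ell/e)^\ell$ and extracting that estimate from the single term $\ell^\ell/\ell!$ of the series $e^\ell=\sum_{k\geq 0}\ell^k/k!$, combined with the crude bound $n(n-1)\cdots(n-\ell+1)\leq n^\ell$, cleanly yields $\binom{n}{\ell}\leq(en/\ell)^\ell$. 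Your handling of $\ell=0$ under the convention $x^0=1$ is a reasonable reading of the theorem's statement, which is itself slightly informal at that boundary (the expression $(n/\ell)^\ell$ is not literally defined there). In short: the proof is complete and correct, and it supplies precisely the elementary argument that the paper delegates to a citation; since the bound is used in the paper only as an off-the-shelf tool inside the proof of Lemma~\ref{lem:a_nk}, either treatment is appropriate, but yours makes the appendix self-contained.
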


Now we define $a_{n,k,\ell}$ as the expected number of paths
from node 1 to node $n$ of length $\ell$ in $G'_{n,k,r}$.
Formally,
\begin{equation*}
a_{n,k,\ell}=\mathbb{E}\left[\left|p\in\mathcal{P}\right|\mid |p|=\ell\right].
\end{equation*}
The following lemma, which is the driving force behind
Theorem~\ref{thm:path_length_informal}, shows that the value of
$a_{n,k,\ell}$ for small $\ell$ is much larger than 
the value of $a_{n,k,\ell}$ for large $\ell$.

\begin{lemma} \label{lem:a_nk}
Given integers $r, c>0$, 
then there exists $n$ such that for $k=n+c$, we have
\begin{equation*}
\sum_{\ell=\log_r n}^{n-1} a_{n,k,\ell} < \frac{1}{n^{3}}\text{  and  } 
a_{n,k,1}>\frac{1}{n}.
\end{equation*}
\end{lemma}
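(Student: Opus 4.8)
The plan is to obtain a closed form for $a_{n,k,\ell}$ and then dispatch the two claims separately, the second being the substantive one. Since edges only run from lower to higher indices, any path from node $1$ to node $n$ of length $\ell$ (i.e.\ using $\ell$ edges) visits $\ell+1$ nodes and so is determined by its $\ell-1$ intermediate nodes, chosen from the $n-2$ interior nodes; there are thus $\binom{n-2}{\ell-1}$ potential such paths, each realized exactly when all $\ell$ of its edges are drawn. As each edge appears independently with probability $\frac{2k}{n(n-1)}$, linearity of expectation gives
\begin{equation*}
a_{n,k,\ell}=\binom{n-2}{\ell-1}\left(\frac{2k}{n(n-1)}\right)^\ell.
\end{equation*}
For the second inequality this is immediate: with $k=n+c$ and $\ell=1$ we get $a_{n,k,1}=\frac{2(n+c)}{n(n-1)}$, and $\frac{2(n+c)}{n(n-1)}>\frac{1}{n}$ reduces to $2(n+c)>n-1$, which holds for every $n>1$.

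The first inequality is the crux. I would start by controlling the edge probability: for $n\geq 2c+3$ one has $\frac{2(n+c)}{n(n-1)}\leq \frac{3}{n}$, so $a_{n,k,\ell}\leq \binom{n}{\ell}\left(\frac{3}{n}\right)^\ell$ (using $\binom{n-2}{\ell-1}\leq\binom{n}{\ell}$). The key simplification is that applying the upper bound $\binom{n}{\ell}\leq\left(\frac{en}{\ell}\right)^\ell$ from Theorem~\ref{thm:binomial} makes the factors of $n$ cancel, leaving the clean, $n$-free estimate $a_{n,k,\ell}\leq\left(\frac{3e}{\ell}\right)^\ell$. For $\ell$ in the summation range we have $\ell\geq\log_r n$, so once $n$ is large enough that $\log_r n>6e$ the ratio $\frac{3e}{\ell}$ is at most $\tfrac12$ and decreasing in $\ell$; the sum is therefore dominated by its first term, and bounding it by a geometric series yields $\sum_{\ell=\log_r n}^{n-1}a_{n,k,\ell}\leq 2\left(\frac{3e}{\log_r n}\right)^{\log_r n}$.

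It remains to show this bound beats $\frac{1}{n^3}$, and here lies the main obstacle together with the one genuine idea: converting a power whose exponent is $\log_r n$ into an explicit power of $n$. Writing $m=\log_r n$ so that $n=r^m$, the identity underlying $(\log n)^{\log n}=n^{\log\log n}$ gives
\begin{equation*}
\left(\frac{3e}{\log_r n}\right)^{\log_r n}=n^{\log_r(3e)-\log_r\log_r n}.
\end{equation*}
Since $3e<r^4$ for every integer $r\geq2$, we have $\log_r(3e)\leq 4$, so the exponent is at most $4-\log_r\log_r n$, which tends to $-\infty$. Choosing $N$ so large that $\log_r\log_r n\geq 8$ for all $n>N$ forces the exponent below $-4$, whence $2\,n^{\log_r(3e)-\log_r\log_r n}\leq 2n^{-4}\leq n^{-3}$. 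I expect the only delicate part to be bookkeeping rather than mathematics: the ceiling in $\lceil\log_r n\rceil$, the geometric-series factor of $2$, and the constant $\log_r(3e)$ must all be absorbed into the slack between $4$ and the slowly growing quantity $\log_r\log_r n$, which is precisely what forces the threshold $N$ to be astronomically large yet finite.
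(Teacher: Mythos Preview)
Your proposal is correct and follows essentially the same route as the paper: derive the closed form for $a_{n,k,\ell}$, handle $\ell=1$ directly, then apply the binomial upper bound $\binom{n}{\ell}\le(en/\ell)^\ell$ so that the powers of $n$ cancel, bound the tail by a geometric series, and finish via the identity $(\log n)^{\log n}=n^{\log\log n}$. The only differences are cosmetic---you pass through $\binom{n-2}{\ell-1}\le\binom{n}{\ell}$ to obtain the slightly cleaner bound $(3e/\ell)^\ell$, whereas the paper bounds $\binom{n-2}{\ell-1}$ directly and carries an innocuous extra $4/n$ factor and $(4e/(\ell-1))^{\ell-1}$ instead.
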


\begin{proof}[\textbf{Proof.}]
We have that 
\begin{equation*}
a_{n,k,\ell}=\binom{n-2}{\ell-1}\left(\frac{2k}{n(n-1)}\right)^\ell.
\end{equation*}
This is because on a path from node 1 to $n$ of length $\ell$,
there are $\binom{n-2}{\ell-1}$ choices of intermediate
nodes from 1 to $n$. Once the nodes are chosen, we need all $\ell$
edges between the nodes to exist, and each edge exists independently 
with probability $\frac{2}{n(n-1)}\cdot k.$

When $\ell=1$, we have $\binom{n-2}{\ell-1}=1$.
Therefore,
\begin{equation*}
a_{n,k,1}=\left(\frac{2k}{n(n-1)}\right)\geq \frac{1}{n},
\end{equation*}
for sufficiently large $n$.
Now we will derive an upper bound for $a_{n,k,\ell}$ using
Theorem~\ref{thm:binomial}.

\begin{align*}
a_{n,k,\ell} &= \binom{n-2}{\ell-1}\left(\frac{2k}{n(n-1)}\right)^\ell \\
&\leq \left(\frac{e(n-2)}{\ell-1}\right)^{\ell-1} \left(\frac{2k}{n(n-1)}\right)^\ell \\
&\leq \left(\frac{2k}{n(n-1)}\right) \left(\frac{2ek(n-2)}{(\ell-1)n(n-1)}\right)^{\ell-1} \\
&\leq \left(\frac{4}{n}\right)\left(\frac{4e}{\ell-1}\right)^{\ell-1}
\end{align*}
The last inequality is true because
$k/(n-1)=(n+c)/(n-1)\leq 2$ for sufficiently large $n$.
Now we have

\begin{align}
\sum_{\ell=\log_r n}^{n-1} a_{n,k,\ell}
&\leq \sum_{\ell=\log_r n}^{n-1} \left(\frac{4}{n}\right)\left(\frac{4e}{\ell-1}\right)^{\ell-1}\nonumber \\
&\leq \sum_{\ell=\log_r n}^{n-1}
\left(\frac{4e}{\ell-1}\right)^{\ell-1} \nonumber\\
&\leq \sum_{\ell=\log_r n}^{n-1}
\left(\frac{4e}{\log_r n}\right)^{\ell-1} \nonumber\\
&\leq \left(\frac{4e}{\log_r n}\right)^{\log_r n}
\sum_{\ell=0}^{n-\log_r n}
\left(\frac{4e}{\log_r n}\right)^\ell \nonumber\\
&\leq \left(e\right)^{3\log_r n}\left(\frac{1}{\log_r n}\right)^{\log_r n}\cdot 2\label{eq:sum}\\
&\leq 2\left(n\right)^3\left(\frac{1}{n}\right)^{\log_r\log_r n}\label{eq:loglog}\\
&\leq \left(\frac{1}{n}\right)^{\log_r\log_r n-4}\nonumber\\
&\leq \frac{1}{n^{3}}.\nonumber
\end{align}

In inequality~\ref{eq:sum}, we use the fact that for large enough $n$,
$\frac{4e}{\log_r n}<\frac{1}{2}$, therefore, 
\begin{equation*}
\sum_{\ell=0}^{n-\log_r n}\left(\frac{4e}{\log_r n}\right)^\ell
\leq \sum_{\ell=0}^{n-\log_r n}\left(\frac{1}{2}\right)^\ell\leq 2
\end{equation*}

In inequality~\ref{eq:loglog}, we use the fact that
\begin{align*}
\left(\log n\right)^{\log n}&=\left(e^{\log \log n}\right)^{\log n}
=\left(e^{\log n}\right)^{\log \log n}\\
&=n^{\log \log n} \qedhere
\end{align*}
\end{proof}

Now we can prove Theorem~\ref{thm:path_length_informal}.

\begin{proof}[\textbf{Proof of Theorem~\ref{thm:path_length_informal}}]

Recall that $\mathcal{P}$ denotes the set of all 
possible paths from node 1 to node $n$ that could be present
in $G_{n,k,r}$,
and let $\mathcal{P}'=\{p\mid |p|<\log_r n - 1\}$.
Then by Lemma~\ref{lem:path_length}, $|\mathcal{P}|<n$.
In Definition~\ref{def:random_graph}, the probability that we return a
graph in step~(4) is at least the probability that there exists an edge
from node 1 to node $n$. This probability is $\geq\frac{1}{n}$ from
Lemma~\ref{lem:a_nk}.
Now we will compute the probability that there exists a path in
$\mathcal{P}\setminus \mathcal{P}'$ in $G_{n,k,r}$
by conditioning on returning a graph in step~(4).
The penultimate inequality is due to Lemma~\ref{lem:a_nk}.

\begin{align*}
&P(\exists p\in G_{n,k,r}\cap \mathcal{P}\setminus\mathcal{P}')\\
&= P(\exists p\in G'_{n,k,r}\cap\mathcal{P}\setminus\mathcal{P}'
\mid \exists q\in G'_{n,k,r}\cap\mathcal{P})\\
&= \frac{P(\exists p\in G'_{n,k,r}\cap\mathcal{P}\setminus\mathcal{P}')}
{P(\exists q\in G'_{n,k,r}\cap\mathcal{P})}\\
&\leq \left(\frac{1}{n^{3}}\right) / \left(\frac{1}{n}\right)
\leq \frac{1}{n^{2}} \qedhere
\end{align*}
\end{proof}
\section{Additional Experiments and Details}
\label{app:experiments}

In this section, we present details and supplementary experiments from 
Sections~\ref{sec:methodology} and~\ref{sec:experiments}.
In the first subsection, we give a short description and implementation
details for all 15 of the NAS algorithms we tested in Section~\ref{sec:experiments},
as well as additional details from Section~\ref{sec:experiments}.
Next, we give an exhaustive experiment on the
BO + neural predictor framework.
After that, we evaluate BANANAS
on the three datasets in NASBench-201.
Then,
we discuss the NASBench-101 API and conduct additional experiments.
Finally,
we study the effect of the length of the path encoding on the performance of BANANAS
on NASBench-201.


\subsection{Details from Section~\ref{sec:experiments}} \label{app:experiments:details}
Here, we give more details on the NAS algorithms we compared
in Section~\ref{sec:experiments}.

\textbf{Regularized evolution.} 
This algorithm consists of iteratively mutating the best achitectures
out of a sample of all architectures evaluated so 
far~\cite{real2019regularized}.
We used the~\cite{nasbench} implementation although we
changed the population size from 50 to 30 to account for fewer total queries.
We also found that in each round, removing the architecture with the worst validation
accuracy performs better than removing the oldest architecture, so this is the
algorithm we compare to. (Technically this would make the algorithm 
closer to standard evolution.)

\textbf{Local search.} 
Another simple baseline, local search iteratively evaluates all architectures
in the neighborhood of the architecture with the lowest validation error found so
far. For NASBench-101, the ``neighborhood'' means all architectures which differ
from the current architecture by one operation or one edge.
We used the implementation from White et al.~\citet{white2020local},
who showed that local search is a state-of-the-art
approach on NASBench-101 and NASBench-201.

\textbf{Bayesian optimization with a GP model.}
We set up Bayesian optimization with a Gaussian process model and UCB acquisition.
In the Gaussian process, we set the distance function between two neural networks
as the sum of the Hamming distances between the adjacency matrices and the list of operations.
We use the ProBO implementation~\cite{neiswanger2019probo}.

\textbf{NASBOT.}
Neural architecture search with Bayesian optimization 
and optimal transport (NASBOT)~\cite{nasbot} 
works by defining a distance function
between neural networks by computing the similarities between layers
and then running an optimal transport algorithm to find the minimum
earth-mover's distance between the two architectures.
Then Bayesian optimization is run using this distance function.
The NASBOT algorithm is specific to macro NAS, and we put in 
a good-faith effort to implement it in the cell-based setting.
Specifically, we compute the distance between two cells by taking
the earth-mover's distance between the set of row-sums, column-sums,
and node operations.
This is a version of the OTMANN distance~\cite{nasbot}, 
defined for the cell-based setting.

\textbf{Random search.} 
The simplest baseline, random search, draws $n$ architectures at
random and outputs the architecture with the lowest validation error.
Despite its simplicity, multiple papers have concluded that random search
is a competitive baseline for NAS algorithms \cite{randomnas, sciuto2019evaluating}.
In Table~\ref{table:darts}, we also compared to Random Search with Weight-Sharing,
which uses shared weights to quickly compare orders of magnitude more architectures. 

\textbf{AlphaX.}
AlphaX casts NAS as a reinforcement learning problem, using a neural network to
guide the search~\cite{alphax}.
Each iteration, a neural network is trained to select the best action,
such as making a small change to, or growing, the current architecture.
We used the open-source implementation of AlphaX as is~\cite{alphax}.

\textbf{BOHAMIANN.}
Bayesian Optimization with Hamiltonian Monte Carlo Artificial Neural Networks 
(BOHAMIANN)~\cite{springenberg2016bayesian}
is an approach which fits in to the ``BO + neural predictor'' framework.
It uses a Bayesian neural network (implemented using Hamiltonian Monte Carlo) as the
neural predictor. We used the BOHAMIANN implementation of the Bayesian neural 
network~\cite{springenberg2016bayesian} with our own outer BO wrapper, so that
we could accurately compare different neural predictors within the framework.

\textbf{REINFORCE.} 
We use the NASBench-101 implementation of REINFORCE~\cite{reinforce}.
Note that this was the best reinforcement learning-based NAS algorithm released by
NASBench-101, outperforming other popular approaches such as a
1-layer LSTM controller trained with PPO~\cite{nasbench}.

\textbf{GCN Predictor.}
We implemented a GCN predictor~\cite{wen2019neural}. Although the code is not open-sourced,
we found an open-source implementation online~\cite{zhang2020neural}.
We used this implementation, keeping the hyperparameters the same as in the original
paper~\cite{wen2019neural}.

\textbf{BONAS.}
We implemented BONAS~\cite{shi2019multi}. Again, the code was not open-sourced,
so we used the same GCN implementation as above~\cite{zhang2020neural},
using our own code for the outer BO wrapper.

\textbf{DNGO.}
Deep Networks for Global Optimization (DNGO) is an implementation of Bayesian optimization
using adaptive basis regression using neural networks instead of
Gaussian processes to avoid the cubic scaling.
We used the open-source code~\cite{snoek2015scalable}.

\textbf{BOHB.}
Bayesian Optimization HyperBand (BOHB) combines multi-fidelity
Bayesian optimization with principled early-stopping from Hyperband~\cite{bohb}.
We use the NASBench implementation~\cite{nasbench}.

\textbf{TPE.}
Tree-structured Parzen estimator (TPE) is a BO-based
hyperparameter optimization algorithm based on
adaptive Parzen windows.
We use the NASBench implementation~\cite{nasbench}.

\textbf{DARTS.}
DARTS~\cite{darts} is a popular first-order (sometimes called ``one-shot'') NAS
algorithm. In DARTS, the neural network parameters and the architecture hyperparameters
are optimized simultaneously using alternating steps of gradient descent.
In Table~\ref{table:darts}, we reported the published numbers from the paper, and
then we retrained the architecture published by the DARTS paper, five times, 
to account for differences in hardware.

\textbf{ASHA.}
Asyncrhonous Successive Halving Algorithm (ASHA) is an algorithm that
uses asynchronous parallelization and early-stopping. 
As with DARTS, we reported both the published number and the numbers we achieved by 
retraining the published architecture on our hardware.

\textbf{Additional notes from Section~\ref{sec:experiments}.}
We give the results from Figure~\ref{fig:nasbench_main} (right) into a table (Table~\ref{table:full}).

\begin{table*}[t]
\caption{Comparison of the architectures with the lowest test error (averaged over 200 trials)
returned by NAS algorithms after 150 architecture evaluations on NASBench-101.}
\setlength\tabcolsep{0pt}
\begin{tabular*}{\textwidth}{ @{\extracolsep{\fill}}  l c c c} 
\toprule
\multicolumn{1}{c}{NAS Algorithm} & \multicolumn{1}{c}{Source} & \multicolumn{1}{c}{Method} & \multicolumn{1}{c}{Test Error} \\
\midrule
REINFORCE & \cite{reinforce} & Reinforcement learning & 6.436 \\
TPE & \cite{tpe} & BO (Parzen windows) & 6.415 \\
BOHB & \cite{bohb} & BO (successive halving) & 6.356  \\
Random search & \cite{randomnas} & Random search & 6.341\\
GCN Pred.\ & \cite{wen2019neural} & GCN & 6.331 \\
BO w.\ GP & \cite{snoek2012practical} & BO (Gaussian process) & 6.267  \\
NASBOT & \cite{nasbot} & BO (Gaussian process) & 6.250  \\
AlphaX & \cite{alphax} & Monte Carlo tree search & 6.233  \\
Reg.\ Evolution & \cite{real2019regularized} & Evolution & 6.109 \\
DNGO & \cite{snoek2015scalable} & BO (neural networks) & 6.085  \\
BOHAMIANN & \cite{springenberg2016bayesian} & BO (Bayesian NN) & 6.010  \\
BONAS & \cite{shi2019multi} & BO (GCN) & 5.954  \\
Local search & \cite{white2020local} & Local search & 5.932  \\
BANANAS & \hspace{-0.5mm}Ours & BO (path encoding) & \textbf{5.923} \\
\bottomrule
\end{tabular*} 
\label{table:full}
\end{table*}

In the main NASBench-101 experiments, Figure~\ref{fig:nasbench_main},
we added an isomorphism-removing subroutine to any algorithm that uses the adjacency
matrix encoding. This is because multiple adjacency matrices can map to the same architecture.
With the path encoding, this is not necessary.
Note that without the isomorphism-removing subroutine, 
algorithms using the adjacency matrix encoding may perform significantly worse (e.g.,
we found this to be true for BANANAS with the adjacency matrix encoding).
This is another strength of the path encoding.

In Section~\ref{sec:experiments}, we described the details of running
BANANAS on the DARTS search space, which resulted in an architecture.
We show this architecture in Figure~\ref{fig:bananas_arch}.

\begin{figure*}
  \centering
    \includegraphics[width=0.43\textwidth]{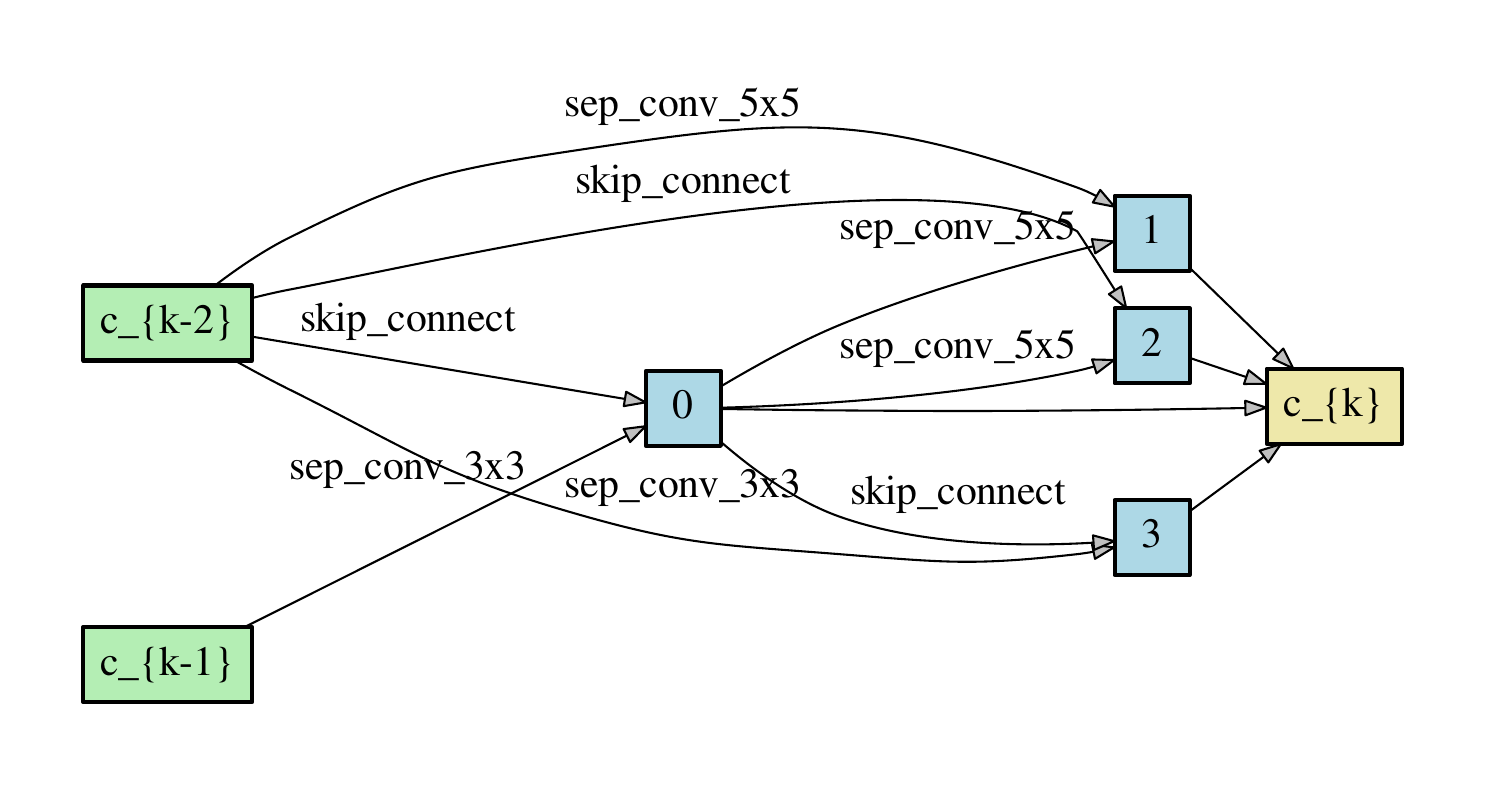}
  \includegraphics[width=0.5\textwidth]{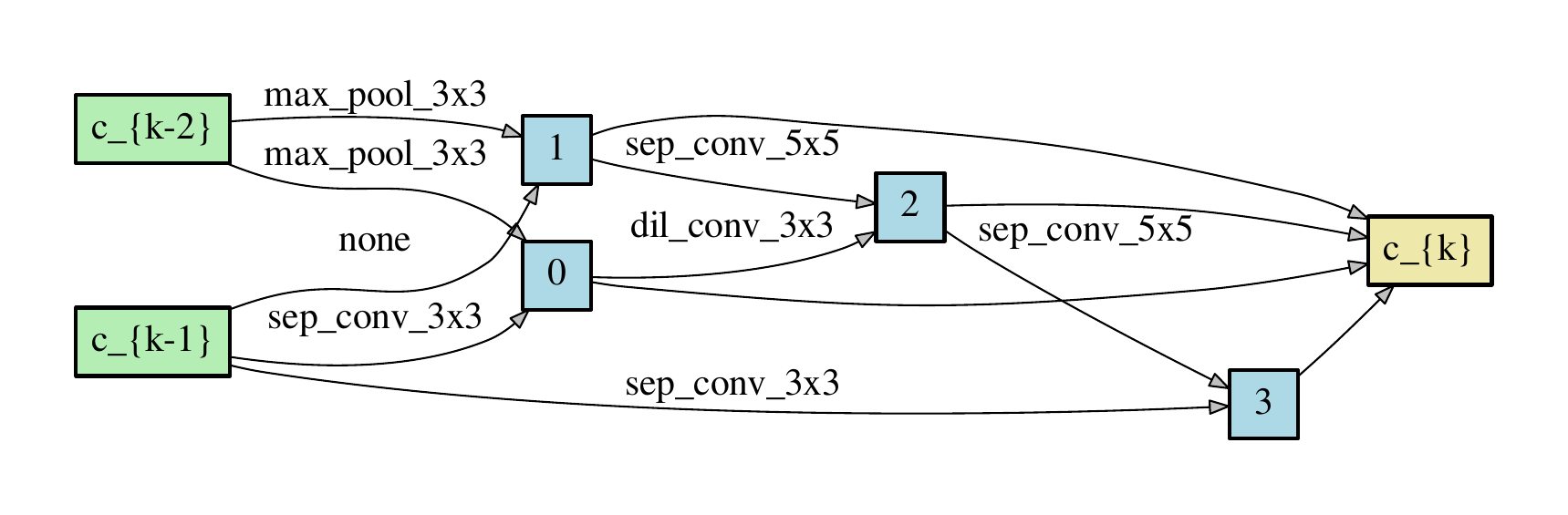}
\caption{The best neural architecture found by BANANAS in the DARTS space. Normal cell (left) and reduction cell (right).}
  \label{fig:bananas_arch}
\end{figure*}

\subsection{Exhaustive Framework Experiment}\label{app:experiments:exhaustive}
In Section~\ref{sec:methodology}, we conducted experiments on each component
individually while keeping all other components fixed.
However, this experimental setup implicitly assumes that all components are linear with
respect to performance. For example, we showed GCN performs worse than 
the path encoding with ITS, and UCB performs worse than ITS using the path encoding, but we never tested GCN together with UCB -- what if it outperforms ITS with the path encoding?

In this section we run a more exhaustive experiment by testing the 18 most promising
configurations. 
We take all combinations of the highest-performing components
from Section~\ref{sec:methodology}).
Specifically, we test all combinations of \{UCB, EI, ITS\},
\{mutation, mutation+random\}, and 
\{GCN, path enc., trunc.\ path enc.)\}
as acquisition function, acquisition optimization strategy, and neural predictor.
We use the same experimental setup as in Section~\ref{sec:methodology}, and we run
500 trials of each algorithm.
See Figure~\ref{fig:framework_study}.
The overall best-performing algorithm was Path-ITS-Mutation, which was the same conclusion
reached in Section~\ref{sec:methodology}.
The next best combinations are Path-ITS-Mut+Rand and Trunc-ITS-Mut+Rand.
Note that there is often very little difference between the path encoding and truncated path encoding,
all else being equal.
The results show that each component has a fairly linear relationship with respect to performance:
mutation outperforms mutation+random; ITS outperforms UCB which outperforms EI;
and both the path and truncated path encodings outperform GCN.

\begin{figure*}
  \centering
    \includegraphics[width=0.48\textwidth]{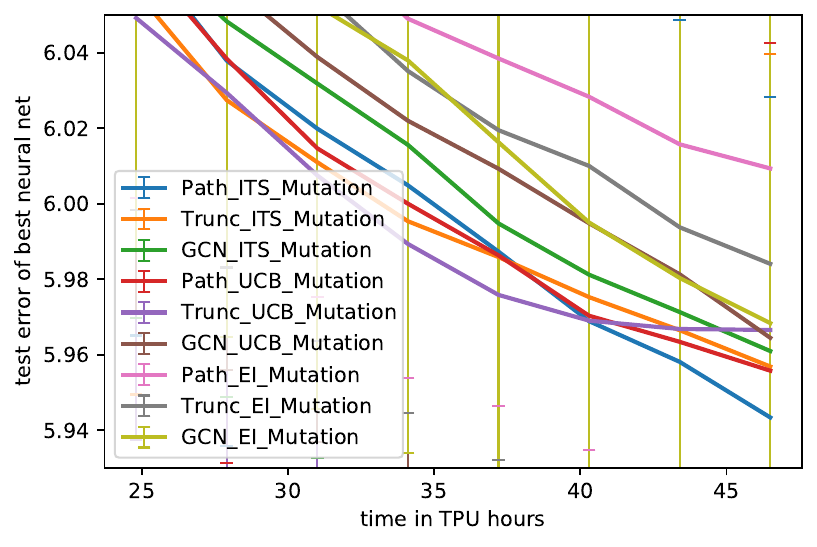}
  \includegraphics[width=0.48\textwidth]{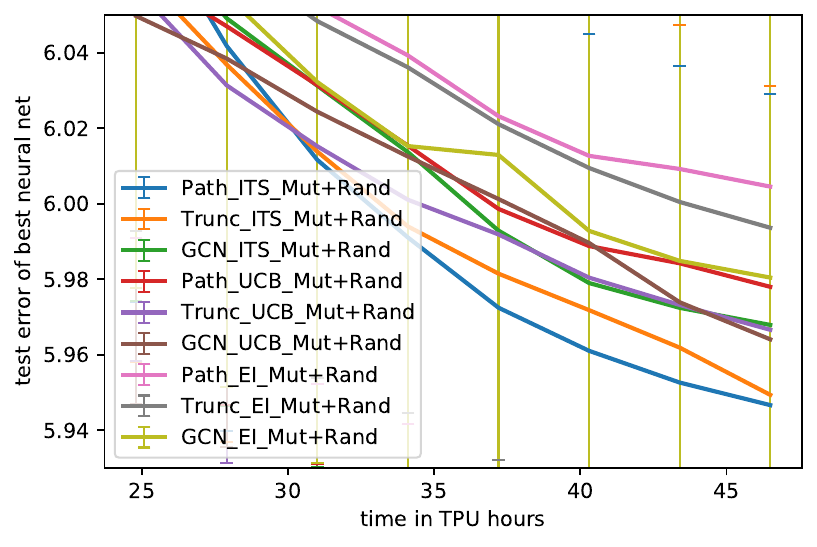}
\caption{A more exhaustive study of the different components in the 
BO + neural predictor framework.}
  \label{fig:framework_study}
\end{figure*}

\subsection{Results on NASBench-201}\label{app:experiments:201}
We described the NASBench-201 dataset in Section~\ref{sec:experiments}.
The NASBench-201 dataset is similar to NASBench-101. 
Note that NASBench-201 is much smaller even than NASBench-101: it is originally 
size 15625, but it only contains 6466 unique architectures after all isomorphisms
are removed~\citep{nasbench201}. 
By contrast, NASBench-101 has about 423,000 architectures after removing
isomorphisms. Some papers have claimed that NASBench-201 may be too small to effectively
benchmark NAS algorithms~\cite{white2020local}.
However, one upside of NASBench-201 is that it contains three image datasets instead of
just one: CIFAR-10, CIFAR-100, and ImageNet-16-120.

Our experimental setup is the same as for NASBench-101 in Section~\ref{sec:experiments}.
See Figure~\ref{fig:201_baselines}.
As with NASBench-101, at each point in time, we plotted the test error of the architecture
with the best validation error found so far (and then we averaged this over 200 trials).
However, on NASBench-201, the validation and test errors are not as highly correlated
as on NASBench-101, which makes it possible for the NAS algorithms to overfit to the
validation errors. Specifically for ImageNet16-120, the lowest validation error
out of all 15625 architectures is 53.233, and the corresponding test error is
53.8833. However, there are 23 architectures which have a higher validation loss but
lower test loss (and the lowest overall test loss is 53.1556).
Coupled with the small size of NASBench-201, this can cause NAS performance over time
to not be strictly decreasing (see Figure~\ref{fig:201_baselines} bottom left).
Therefore, we focus on the plots of the validation error over time 
(Figure~\ref{fig:201_baselines} top row).

Due to the extremely small size of the search space as described above, several algorithms
tie for the best performance. We see that BANANAS ties for the best performance on
CIFAR-10 and CIFAR-100. On ImageNet16-120, it ties for the best performance after 40 GPU
hours, but NASBOT and BO w.\ GP reach top performance more quickly.
We stress that we did not change any hyperparameters or any other part of the code of BANANAS,
when moving from NASBench-101 to all three NASBench-201 datasets, which shows that
BANANAS does not need to be tuned.

\begin{figure*}
\centering %
\includegraphics[width=0.32\textwidth]{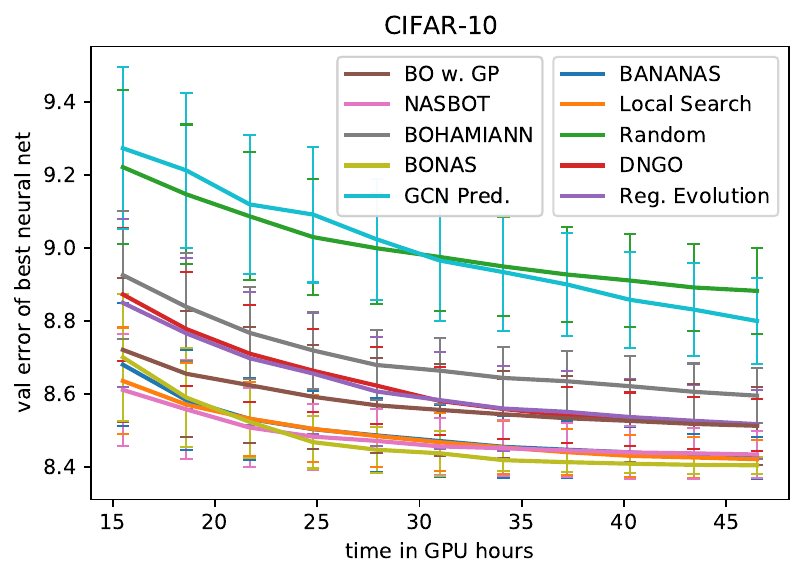}
\hspace{-3pt}
\includegraphics[width=0.32\textwidth]{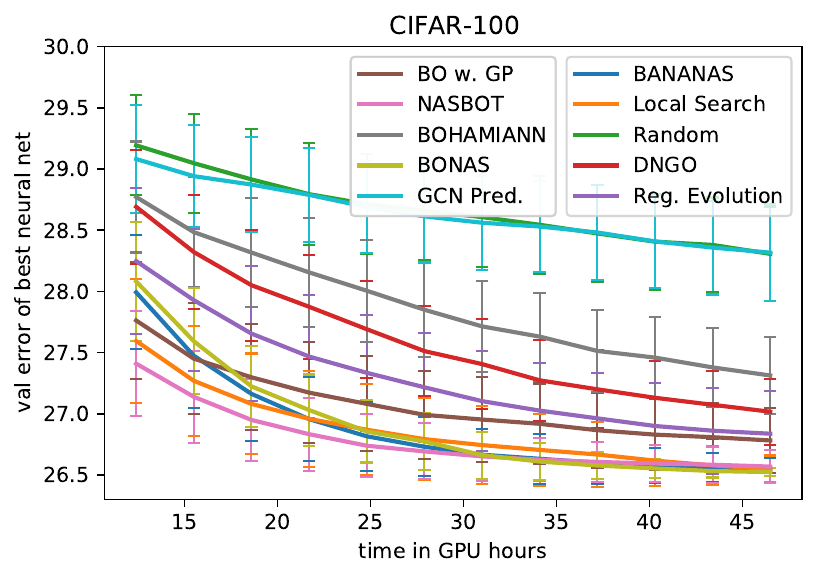}
\hspace{-3pt}
\includegraphics[width=0.32\textwidth]{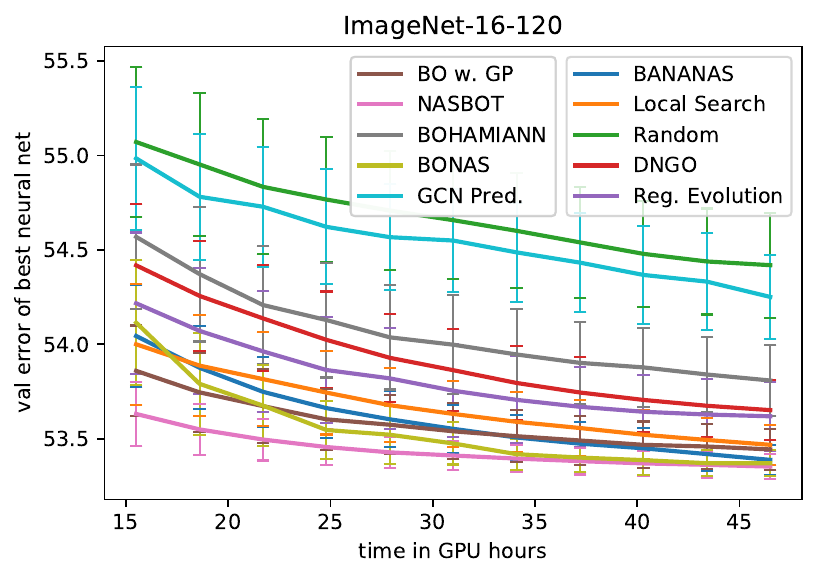}
\includegraphics[width=0.32\textwidth]{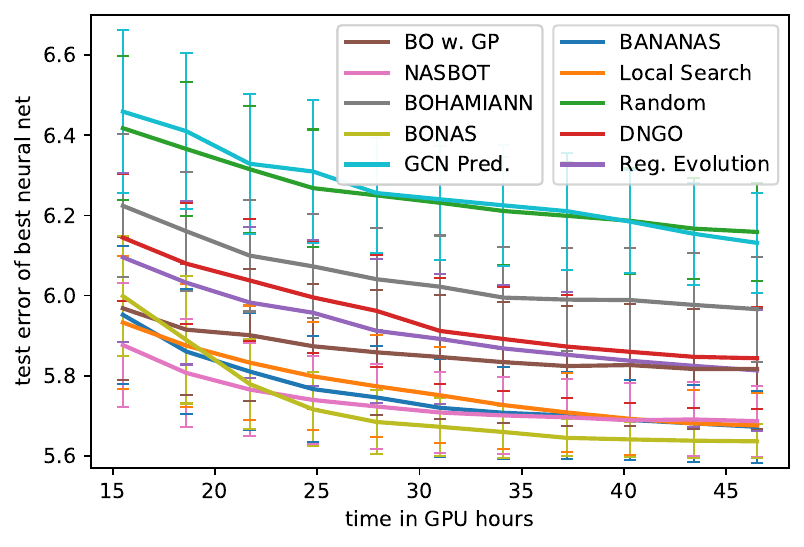}
\hspace{-3pt}
\includegraphics[width=0.32\textwidth]{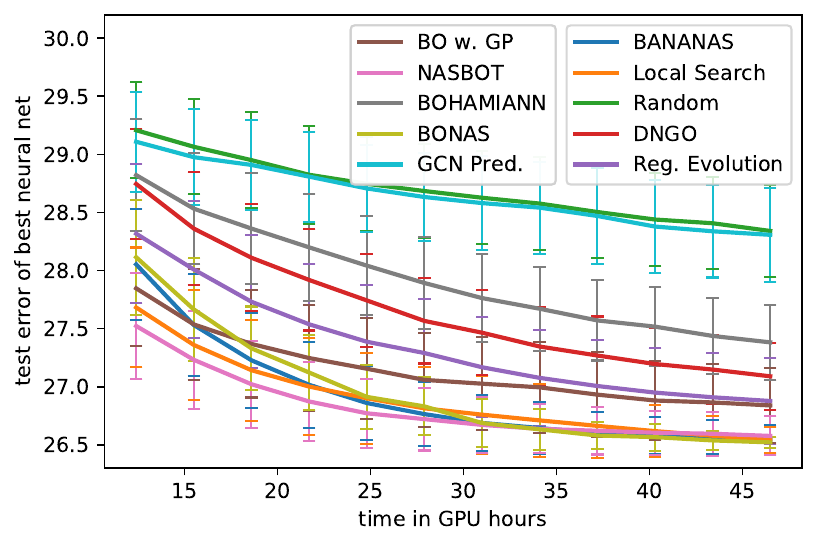}
\hspace{-3pt}
\includegraphics[width=0.32\textwidth]{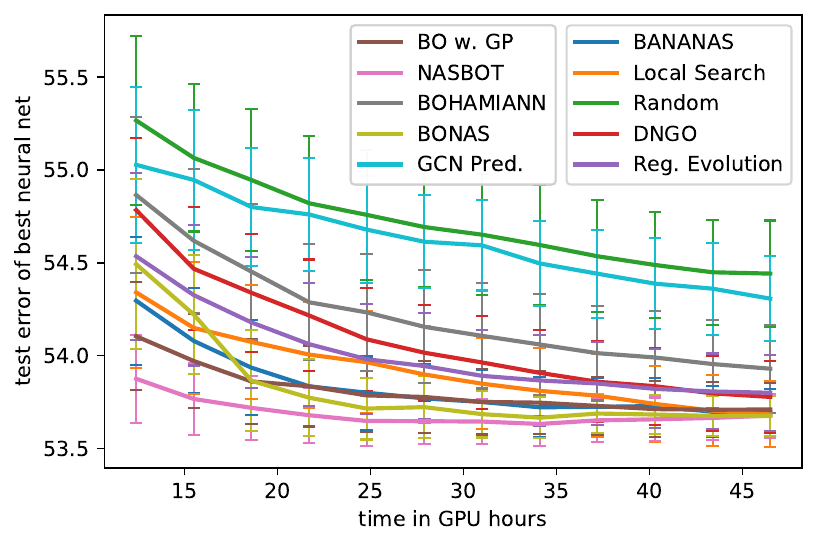}
\caption{
Results on NASBench-201. Top row is validation error, bottom row is test error.
CIFAR-10 (left), CIFAR-100 (middle), and ImageNet-16-120 (right).
}
\label{fig:201_baselines}
\end{figure*}

\subsection{NASBench-101 API}
\label{app:experiments:rand_val_error}
In the NASBench-101 dataset, each architecture was trained to 108 epochs three separate times with different random seeds.
The original paper conducted experiments by 
(\emph{1}) choosing a random validation error when evaluating each architecture, and then reporting the mean test error at the conclusion of the NAS algorithm. 
The most realistic setting is:
(\emph{2}) choosing a random validation error when evaluating each architecture, 
and then reporting the corresponding test error, and an approximation of this is
(\emph{3}) using the mean validation error in the search, and reporting the mean test error at the end.
However, (\emph{2}) is currently not possible with the NASBench-101 API, so our options are
(\emph{1}) or (\emph{3}), neither of which is perfect.
(\emph{3}) does not capture the uncertainty in real NAS experiments, while (\emph{1}) does not give as
accurate results because of the differences between random validation errors and mean test errors.
We used (\emph{3}) for Figure~\ref{fig:nasbench_main}, and now we use
(\emph{1}) in Figure~\ref{fig:extras}.
We found the overall trends to be the same in Figure~\ref{fig:extras} 
(in particular, BANANAS still distinctly outperforms all other algorithms after 40 iterations),
but the Bayesian optimization-based methods performed better at the very start.

\begin{figure*}
\centering %
\includegraphics[width=0.45\textwidth]{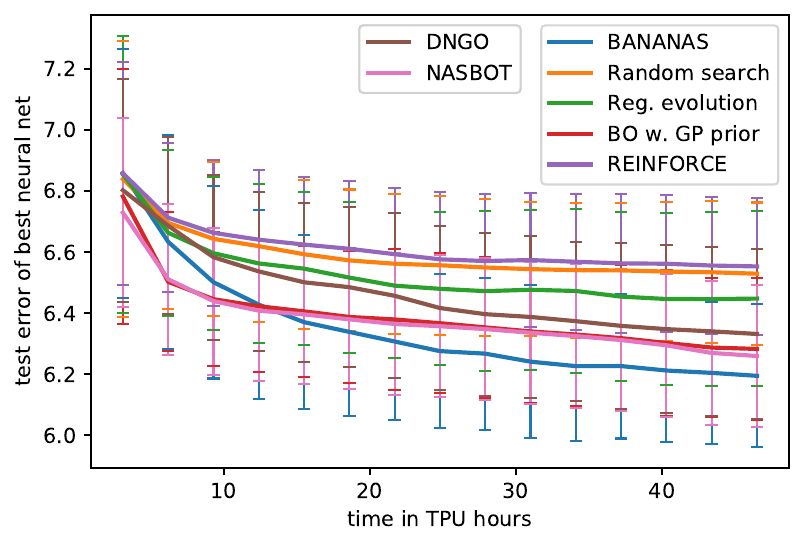}
\hspace{0.5cm}
\includegraphics[width=0.45\textwidth]{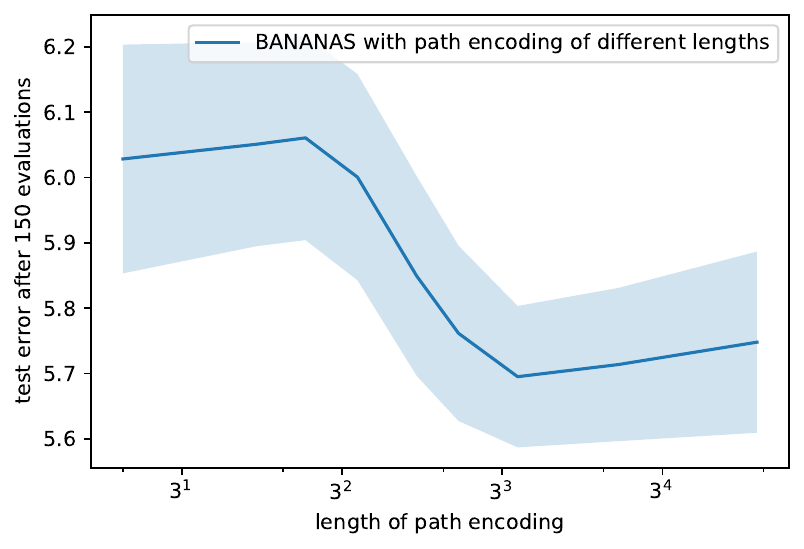}
\caption{
NAS experiments with random validation error (left).
Performance of BANANAS on NASBench-201 with CIFAR-10
with the path encoding truncated to different lengths (right).
}
\label{fig:extras}
\end{figure*}

\subsection{Path encoding length}
\label{app:experiments:path_length}
In Section~\ref{sec:methodology}, we gave theoretical results which suggested
that truncating the path encoding may not decrease performance of NAS algorithms
such as BANANAS. We backed this up by plotting performance of BANANAS vs.\ truncation
length of the path encoding (Figure~\ref{fig:path_encoding}.
Specifically, we ran BANANAS up to 150 evaluations for truncation lengths of
$3^0$, $3^1,\dots,3^5$ and plotted the results.

Now, we conduct the same experiment for NASBench-201.
We run BANANAS up to 150 evaluations on CIFAR-10 on 
NASBench-201 for truncation lengths of 1, 2, 5, 7, 10, 15, 20, 30, 60, and 155
(where 155 is the total number of paths for NASBench-201).
See Figure~\ref{fig:extras}.
We see that truncating from 155 down to just 30 has no
decrease in performance.
In fact, similar to NASBench-101, 
the performance after truncation actually \emph{improves} up to a certain point. 
We believe this is because with the full-length encoding, 
the neural predictor overfits to very rare paths.

Next, we give a table of the probabilities
of paths by length from NASBench-101 generated from
\texttt{random\_spec()} (i.e., Definition~\ref{def:random_graph}).
These probabilities were computed experimentally
by making 100000 calls to \texttt{random\_spec()}.
See Table~\ref{tab:nasbench-probs}.
This table gives further experimental evidence to support
Theorem~\ref{thm:path_length_informal}, because it shows that
the longest paths are exceedingly rare.

\begin{table*}

\centering
\caption{Probabilities of path lengths in NASBench-101 using \texttt{random\_spec()}.}

\begin{minipage}[c]{.85\textwidth}

\setlength\tabcolsep{0pt}
\begin{tabular*}{\textwidth}{l @{\extracolsep{\fill}}*{8}{S[table-format=1.4]}} 
\toprule
\multicolumn{1}{c}{Path Length} & \multicolumn{1}{c}{Probability} & \multicolumn{1}{c}{Total num.\ paths} & \multicolumn{1}{c}{Expected num.\ paths} \\
\midrule
1 & 0.200 & 1 & 0.200 \\
2 & 0.127 & 3 & 0.380 \\
3 & 3.36 $\times$ $10^{-2}$ & 9 & 0.303 \\
4 & 3.92 $\times$ $10^{-3}$ & 27 & 0.106 \\
5 & 1.50 $\times$ $10^{-4}$ & 81 & 1.22 $\times$ $10^{-2}$ \\
6 & 6.37 $\times$ $10^{-7}$ & 243 & 1.55 $\times$ $10^{-4}$ \\
\bottomrule
\end{tabular*} 
\label{tab:nasbench-probs}

\end{minipage}

\end{table*} 

\section{Best practices checklist for NAS research} \label{app:checklist}
The area of NAS has seen problems with reproducibility, 
as well as fair empirical comparisons.
Following calls for fair and reproducible NAS research \cite{randomnas, nasbench}, a best practices checklist was recently created \cite{lindauer2019best}.
In order to promote fair and reproducible NAS research, we address all points on the checklist, and we encourage future papers to do the same.
Our code is available at \url{https://github.com/naszilla/naszilla}.

\begin{itemize}
    \item \emph{Code for the training pipeline used to evaluate the final architectures.}
    We used three of the most popular search spaces in NAS research, the NASBench-101 and NASBench-201 search spaces, and the DARTS search space.
    For NASBench-101 and 201, the accuracy of all architectures were precomputed.
    For the DARTS search space, we released our fork of the DARTS repo, which is forked from the DARTS repo designed specifically for reproducible experiments~\cite{randomnas}, making trivial changes to account for pytorch $1.2.0$.
    \item \emph{Code for the search space.}
    We used the popular and publicly avaliable NASBench and DARTS search spaces with no changes.
    \item \emph{Hyperparameters used for the final evaluation pipeline, as well as random seeds.}
    We left all hyperparameters unchanged.
    We trained the architectures found by BANANAS, ASHA, and DARTS five times each,
    using random seeds 0, 1, 2, 3, 4.

    \item \emph{For all NAS methods you compare, did you use exactly the same NAS benchmark, including the same dataset, search space, and code for training the architectures and hyperparameters for that code?}
    Yes, we did this by virtue of the NASBench-101 and 201 datasets.
    For the DARTS experiments, we used the reported architectures (found using the same search space and dataset as our method),
    and then we trained the final architectures using the same code, including hyperparameters.
    We compared different NAS methods using exactly the same NAS benchmark.
    \item \emph{Did you control for confounding factors?}
    Yes, we used the same setup for all of our NASBench-101 and 201 experiments. 
    For the DARTS search space, we compared our algorithm to two other algorithms using the same setup (pytorch version, CUDA version, etc).
    Across training over 5 seeds for each algorithm, we used different GPUs, 
    which we found to have no greater effect than using a different random seed.
    \item \emph{Did you run ablation studies?}
    Yes, in fact, ablation studies guided our entire decision process in constructing
    BANANAS. Section~\ref{sec:methodology} is devoted entirely to ablation studies.
    \item \emph{Did you use the same evaluation protocol for the methods being compared?}
    Yes, we used the same evaluation protocol for all methods and we tried multiple evaluation protocols.
    \item \emph{Did you compare performance over time?}
    Yes, all of our plots are performance over time.
    \item \emph{Did you compare to random search?}
    Yes.
    \item \emph{Did you perform multiple runs of your experiments and report seeds?}
    We ran 200 trials of our NASBench-101 and 201 experiments. Since we ran so many trials, we did not report random seeds.
    We ran four total trials of BANANAS on the DARTS search space. 
    Currently we do not have a fully deterministic version of BANANAS on the DARTS search space 
    (which would be harder to implement as the algorithm runs on 10 GPUs).
    However, the average final error across trials was within 0.1\%.
    
    \item \emph{Did you use tabular or surrogate benchmarks for in-depth evaluations} 
    Yes, we used NASBench-101 and 201.
    \item \emph{Did you report how you tuned hyperparameters, and what time and resources this required?}
    We performed light hyperparameter tuning at the start of this project, for
    the number of layers, layer size, learning rate, and number of epochs of the meta neural
    network. 
    We did not perform any hyperparameter tuning when we ran the algorithm on NASBench-201 for
    all three datasets, or the DARTS search space.
    This suggests that the current hyperparameters work well for most
    new search spaces.
    \item \emph{Did you report the time for the entire end-to-end NAS method?}
    We reported time for the entire end-to-end NAS method.
    \item \emph{Did you report all details of your experimental setup?}
    We reported all details of our experimental setup.

\end{itemize}

\end{document}